%% file: main.tex
\documentclass[11pt]{article}
\usepackage[final]{acl}
\input{macros}

\title{\papertitle}
\author{
  \textbf{Hoang Pham}\textsuperscript{1} \quad
  \textbf{Duc Anh Vu}\textsuperscript{1} \quad
  \textbf{Duy Tung Doan}\textsuperscript{2} \\
  \textbf{Cong-Duy Nguyen}\textsuperscript{3} \quad
  \textbf{Anh Tuan Luu}\textsuperscript{1,3}\thanks{Corresponding author.} \\
  \textsuperscript{1}Nanyang Technological University \quad
  \textsuperscript{2}Viettel AI, Viettel Group \quad
  \textsuperscript{3}VinUniversity 
  \vspace{0.3cm}\\
  \texttt{Email:\hspace{0.4em}pham0100@ntu.edu.sg}
}

\begin{document}
\maketitle
\input{sections/00_abstract}
\input{sections/01_intro}
\input{sections/02_related_work}
\input{sections/03_setup}
\input{sections/04_analysis}
\input{sections/05_method}
\input{sections/06_experiments}
\input{sections/07_conclusion}
\input{sections/08_limitations}

\bibliography{refs}

\appendix
\input{sections/A_appendix}

\end{document}

%% file: macros.tex
\usepackage[T1]{fontenc}
\usepackage[utf8]{inputenc}
\usepackage{times}
\usepackage{microtype}
\usepackage{amsmath,amssymb,amsthm}
\newtheorem{proposition}{Proposition}

\newtheorem{lemma}{Lemma}
\theoremstyle{remark}
\newtheorem{remark}{Remark}
\usepackage{booktabs}
\usepackage{multirow}
\usepackage{pifont}
\usepackage{enumitem}
\usepackage{graphicx}
\usepackage{adjustbox}
\usepackage{float}
\usepackage{placeins}
\usepackage[table]{xcolor}
\usepackage{tikz}
\usetikzlibrary{positioning,arrows.meta,fit,calc}
\usepackage{algorithm}
\usepackage{algpseudocode}

\newcommand{\papertitle}{Copying Is Not Grounding: Training Faithful
Long-Form RAG\\with Authored-Sentence Credit}

\definecolor{pblueD}{HTML}{1A3A5F}   %
\definecolor{pblue}{HTML}{2C5F8E}    %
\definecolor{pblueL}{HTML}{9AB5D1}   %
\definecolor{pgreenD}{HTML}{1D5640}  %
\definecolor{pgreen}{HTML}{3B7D5E}   %
\definecolor{pgreenL}{HTML}{A3C9B5}  %
\definecolor{pteal}{HTML}{2F7F86}   %
\definecolor{polive}{HTML}{6B7A2F}  %
\definecolor{pred}{HTML}{8E3B33}    %
\definecolor{pgrey}{HTML}{6B7280}    %
\definecolor{pgreyL}{HTML}{D5D9DE}   %
\definecolor{ptext}{HTML}{1F2933}    %

\newcommand{\ind}{\mathbf{1}}

\makeatletter
\newcommand{\para}[1]{\ifvmode\if@nobreak\else\vspace{0.1cm plus 2pt}\fi\else\unskip\vspace{0.1cm plus 2pt}\\\fi\noindent\textbf{#1}\ }
\makeatother

\newcommand{\method}{\mbox{\textup{ASC}}}  %

\colorlet{oursrow}{pblueL!35}
\makeatletter
\newcommand{\shadeL}{\gdef\CT@cell@color{\CT@color{oursrow}\@tempdimb\z@\global\let\CT@cell@color\relax}}
\newcommand{\shadeR}{\gdef\CT@cell@color{\CT@color{oursrow}\@tempdimc\z@\global\let\CT@cell@color\relax}}
\newcommand{\shadeLR}{\gdef\CT@cell@color{\CT@color{oursrow}\@tempdimb\z@\@tempdimc\z@\global\let\CT@cell@color\relax}}
\makeatother

\newcommand{\shortcut}[1]{\textcolor{black!38}{#1}}

\colorlet{shortcutcell}{pgreenL!55}
\newcommand{\extag}[1]{\textsubscript{\textsf{\scriptsize #1}}}
\newcommand{\exC}[1]{\textcolor{pgreenD}{#1}\extag{C}}
\newcommand{\exS}[1]{\textcolor{pblueD}{#1}\extag{A}}
\newcommand{\exU}[1]{\textcolor{pred}{#1}\extag{U}}
\newcommand{\exV}[1]{\textcolor{pgrey}{\textit{#1}}\extag{V}}
\usepackage{soul}
\colorlet{exsrccol}{pgreenL!50}
\newcommand{\exsrc}[1]{{\sethlcolor{exsrccol}\hl{#1}}}
\newcommand{\exlabel}[1]{\noindent\textbf{#1}\par\smallskip}
\usepackage{tcolorbox}
\tcbuselibrary{skins,breakable}
\newtcolorbox{exbox}[2]{enhanced, colback=white, colframe=black!55, boxrule=0.5pt, arc=1.5pt,
  fonttitle=\bfseries\footnotesize, coltitle=black, colbacktitle=black!7, titlerule=0.4pt, toptitle=1pt,
  bottomtitle=1pt, title={#1}, left=4pt, right=4pt, top=3pt, bottom=3pt,
  sidebyside, sidebyside align=top seam, sidebyside gap=10pt, lefthand width=#2\linewidth,
  segmentation style={black!25}, fontupper=\scriptsize\color{black!85}, fontlower=\footnotesize}
\newtcolorbox{oursbox}{enhanced, breakable, colback=oursrow, frame hidden, sharp corners, boxsep=0pt,
  left=2pt, right=2pt, top=1.5pt, bottom=1.5pt, before skip=1pt, after skip=1pt}
\newtcolorbox{promptbox}[2][]{enhanced, colback=white, colframe=black, boxrule=0.5pt, arc=1.5pt,
  left=5pt, right=5pt, top=3pt, bottom=4pt, fonttitle=\bfseries\footnotesize, coltitle=black,
  colbacktitle=black!7, titlerule=0.4pt, toptitle=1pt, bottomtitle=1pt, fontupper=\footnotesize, #1}
\newcommand{\slot}[1]{\textit{\{#1\}}}

%% file: sections/00_abstract.tex
\begin{abstract}
In search engines and assistants, long-form RAG answers can mislead readers with claims their sources do not support.
Faithfulness rewards, the share of supported sentences, train models toward fewer such claims.
We show that these rewards are hacked by copying.
A sentence copied from the passages is supported almost by construction, so trained models learn to copy rather than ground what they write.
Common defenses only move the hack: a KL or copy penalty leaves copying in place or trades it for shorter, emptier answers and refusals.
GRPO, the standard RL algorithm, makes it worse by giving every sentence its answer's advantage, reinforcing copied and grounded sentences alike.
We propose Authored-Sentence Credit (\textbf{\method{}})\footnote{Our code is available at \url{https://anonymous.4open.science/r/asc14}.}, which fixes both by moving GRPO's reward and advantage from the answer to the sentence level.
The reward credits only sentences that are authored rather than copied, say more than the question, and are supported by the passages.
The advantage compares each sentence with every sentence sampled for the question, so one that earns no credit ranks last and is never reinforced.
Across four models and two in-domain, three held-out corpora, only \method{} lowers the error rate of authored sentences and writes more supported ones in every setting, by 38\% and 46\% at the median under held-out evaluators, giving readers more grounded information.
It takes none of these shortcuts, and two LLM judges comparing whole answers prefer no baseline to the base model but rate \method{}'s answers on par with it.
We conclude that a faithfulness reward should credit what a model writes, not what it copies.
\end{abstract}

%% file: sections/01_intro.tex
\section{Introduction}
\label{sec:intro}

Retrieval-augmented generation \citep[RAG;][]{lewis_rag_2020} now answers open questions in search engines and assistants, yet its long-form answers often state claims their sources do not support, in web search \citep{liu_verifiability_2023} as in law and medicine \citep{magesh_legal_2025,wu_sourcecheckup_2025}.
Faithfulness, the share of an answer's sentences a checker finds supported by the passages \citep{gao_alce_2023,min_factscore_2023,tang_minicheck_2024}, is now a training signal against such claims, as a reward for reinforcement learning or to rank preference pairs \citep{huang_citations_2024,chen_factuality_2026,tian_facttune_2024}.

\begin{figure}[t]
\centering
\includegraphics[width=\columnwidth]{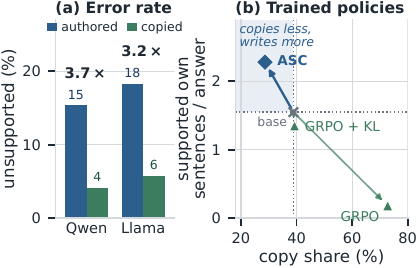}
\caption{\textbf{A faithfulness reward can be hacked by copying} (ASQA).
\textbf{(a)}~Under the reward's checker, the sentences a model writes itself are unsupported 3.2 to 3.7 times as often as those it copies.
\textbf{(b)}~Qwen3.5-9B after training: GRPO copies and nearly stops writing, GRPO~+~KL writes less, and \method{} copies less and writes more supported sentences of its own (held-out checker).}
\label{fig:teaser}
\end{figure}

The reward has a cheaper way up.
A sentence copied from the passages is supported almost by construction: on ASQA, the sentences Qwen3.5-9B writes itself are unsupported 3.7 times as often as those it copies, and those of Llama-3.1-8B 3.2 times (Figure~\ref{fig:teaser}a).
The reward therefore rises with every sentence a model copies instead of writing.

Trained policies exploit this gap, a form of reward hacking \citep{skalse_hacking_2022}.
GRPO \citep{shao_grpo_2024} raises the copy share of Qwen3.5-9B from 38.8 to 72.8 (Figure~\ref{fig:teaser}b) and shrinks its answers from 74 to 14 words: the reward climbs because the model stops writing, not because it grounds better.
Common defenses only move the hack between three \emph{shortcuts} (copying, writing less and writing emptier sentences): on the same model, a KL penalty holds copying at the base level, but refusals more than double and answers grow shorter and emptier, and a copy penalty makes it decline most questions.
GRPO itself makes it worse: it gives every token of an answer the same advantage, its update weight, so a copied or empty sentence in a high-reward answer is reinforced and a grounded one in a low-reward answer pushed down.

We propose Authored-Sentence Credit (\textbf{\method{}}), to our knowledge the first faithfulness reward that pays only for what the model writes itself (Figure~\ref{fig:method}).
It changes GRPO in two places and adds no model.
Its reward credits a sentence only if it is not copied, says more than the question (not \emph{vacuous}), and is supported by the passages.
Its advantage scores each sentence against all sentences sampled for the question, so each sentence is judged by its own credit, not its answer's.
Together, they guarantee that a copied, vacuous or unsupported sentence ranks last in its group and is never reinforced.

We train four models from two families on two corpora and test them on five, three held out.
Every baseline, even a sentence-level reward, lowers the error rate of authored sentences at the median, yet takes a shortcut in nearly every setting.
\method{} alone lowers it while writing more, with no shortcut in any setting: at the median, it cuts that error rate by 38\% under an independent judge and writes 46\% more supported sentences of its own, giving readers more grounded information; human annotators confirm the drop without the judge.
A higher reward alone does not give better answers: every baseline raises the reward, yet two LLM judges comparing whole answers prefer none to the base model but rate \method{} on par with it and above every baseline.

\noindent In summary, our contributions are as follows.
\begin{itemize}[leftmargin=*,itemsep=1pt,topsep=2pt,parsep=0pt]
  \item \textbf{Analysis.} We show that faithfulness rewards pay for copying, that KL and copy penalties only move the hack to other shortcuts, and that GRPO's answer-level advantage barely tells copied from grounded sentences.
  \item \textbf{Method.} We propose \method{}, to our knowledge the first faithfulness reward that credits only authored sentences, and prove that no uncredited sentence is ever reinforced.
  \item \textbf{Results.} Only \method{} lowers the authored error rate and writes more supported sentences in all settings, with no shortcut and no quality loss.
\end{itemize}

%% file: sections/02_related_work.tex
\section{Related Work}
\label{sec:related}

\para{Faithfulness metrics favor copying.}
Long-form answers are scored statement by statement against their sources \citep{gao_alce_2023,min_factscore_2023}, and small checkers make this scoring cheap enough to serve as a reward \citep{tang_minicheck_2024,zha_alignscore_2023}.
In summarization, such metrics favor extractive output, so faithfulness gains often come from copying \citep{ladhak_effective_2022,dreyer_tradeoff_2023}, and an entailment reward can trade informativeness for consistency \citep{roit_entailment_2023}.
We study this bias as a reward hack in long-form RAG, where copying has a cost for readers: in generative search, more precise citations come with more copying from the sources and lower perceived utility \citep{liu_verifiability_2023}.
Trivially true claims also inflate factuality scores \citep{jiang_core_2025}; our vacuity test removes them by checking each sentence against the question alone, as context-contrast methods do \citep{shi_cad_2024,cohenwang_contextcite_2024}.

\para{Grounded generation and reward hacking.}
Faithfulness and factuality objectives train models through preference optimization \citep{tian_facttune_2024,song_trustalign_2024}, reinforcement learning \citep{si_canoe_2025,huang_citations_2024}, self-reflection \citep{asai_selfrag_2024} and citation tuning \citep{ye_agree_2024}, and one method even trains RAG models to copy more so that they hallucinate less \citep{long_copypaste_2025}.
Optimizing such a proxy invites reward hacking \citep{skalse_hacking_2022,gao_overoptimization_2023}, best studied for length \citep{singhal_length_2023}; for factuality, precision-only metrics and rewards favor short, less detailed answers \citep{wei_safe_2024,chen_factuality_2026}.
A KL penalty does not prevent hacking under heavy-tailed reward error \citep{kwa_goodhart_2024}, and optimization amplifies a reward's biases \citep{shapira_sycophancy_2026,moya_tie_2026}.
We show that faithfulness rewards are hacked by copying, and that common defenses only move the hack.

\para{Fine-grained credit assignment.}
Dense signals score each sentence \citep{wu_finegrained_2023} or reasoning step \citep{lightman_verify_2024}.
GRPO normalizes rewards within a group of answers \citep{shao_grpo_2024}, and GiGPO within groups of steps \citep{feng_gigpo_2025}.
The closest rewards score facts or sentences for closed-book correctness \citep{wen_rlfh_2025,gu_maskdpo_2025} or for support in RAG \citep{huang_citations_2024}, but none asks whether a sentence was copied.
\method{} withholds credit from copied and vacuous sentences and compares sentences, not answers, so these are never reinforced.

%% file: sections/03_setup.tex
\section{Problem Formulation and Setup}
\label{sec:setup}

\para{Task and reward.}
A prompt consists of a question \(q\) and retrieved passages \(P\), and a policy \(\pi_\theta\) writes a long-form answer \(y=(y_1,\dots,y_n)\) of \(n\) sentences.
A checker \(g\) (e.g., Bespoke-MiniCheck-7B; \citealp{tang_minicheck_2024,bespoke_minicheck_2024}) returns \(g(y_s,P)\in[0,1]\), its probability that \(P\) supports \(y_s\), and the sentence counts as supported when \(g(y_s,P)\ge\kappa\).
The faithfulness reward that training maximizes and evaluations report is the share of supported sentences,
\begin{equation}\textstyle
R(y)\;=\;\frac{1}{n}\sum_{s=1}^{n}\ind\!\left[g(y_s,P)\ge\kappa\right].
\label{eq:reward}
\end{equation}

\noindent\textbf{Training.}
GRPO \citep{shao_grpo_2024} samples \(G\) answers to each prompt and gives every token of answer \(i\) the same advantage, its reward standardized over the group, \(A^{(i)}=(R(y^{(i)})-\mu)/(\sigma+\epsilon)\) with \(\mu\), \(\sigma\) the group's mean and standard deviation; DPO \citep{rafailov_dpo_2023} instead learns offline from pairs of answers, preferring the one with higher \(R\).

\para{Copied, authored and vacuous sentences.}
The copy share \(c(y_s)\) of a sentence is the fraction of its words inside verbatim matches of at least \(m\) words with \(P\), and \(c(y)\) is the same fraction over an answer.
A sentence is \emph{copied} if \(c(y_s)\ge\tau\) and \emph{authored} otherwise; \(\rho(y)\) is the share of copied sentences in an answer.
A sentence is \emph{vacuous} if it passes the same support test with the question in place of the passages, \(\mbox{\(g(y_s,q)\ge\kappa\)}\): it needs no evidence to count as supported, like ``The song was released.'' for a question about a song.
Only authored, non-vacuous sentences show whether a model can support what it writes itself; we call one \emph{grounded} when \(P\) supports it.

\para{Goal and shortcuts.}
Training should make a model's own sentences more reliable: it should lower the \emph{authored error rate} \(e_a(\pi)\), the share of authored sentences that the passages do not support, which is 2--40\% for the base model \(\pi_0\).
It should do so without writing less: the copy share \(c\), vacuous share \(v\) and refusal rate \(d\) (the share of answers that decline) should not rise above those of \(\pi_0\), nor the answer length \(\ell\) fall below it.
\(R\) does not encode this goal and can be hacked through three \emph{shortcuts}, copying more, writing less and writing emptier sentences, with refusal as the extreme; each raises \(R\), and writing less can even lower \(e_a\) by dropping risky sentences, with no authored sentence becoming more reliable.
Section~\ref{sec:analysis} shows that training on \(R\) takes these shortcuts, and Section~\ref{sec:method} introduces a signal that improves on \(\pi_0\) without them.

\para{Metrics.}
Two lead metrics measure the goal; the others detect shortcuts and check quality (Table~\ref{tab:app-metrics}).
\begin{itemize}[leftmargin=*,itemsep=1pt,topsep=2pt,parsep=0pt]
  \item \textbf{Reliability (Err).} \(e_a\), judged with the passages by Gemma-4-31B \citep{gemma_gemma4_2026}, which comes from a different family than the reward's checker and is never used in training.
  \item \textbf{Content (Supp.).} Supported authored sentences per answer under the held-out MiniCheck-Flan-T5-Large \citep{tang_minicheck_2024}; unlike Err, it falls when a model simply writes less.
  \item \textbf{Shortcuts (Copy, Vac., Ref., Words).} \(c\), \(v\), \(d\) and \(\ell\); a policy \emph{takes a shortcut} when \(c\), \(v\) or \(d\) exceeds that of \(\pi_0\) by a set margin, or \(\ell\) falls below a set fraction of it (values in Section~\ref{sec:exp-setup}).
  \item \textbf{Checks.} AlignScore \citep{zha_alignscore_2023} as a second held-out checker, ASQA exact match (EM; \citealp{gao_alce_2023}), and two LLM judges of whole answers (Section~\ref{sec:main}).
\end{itemize}

\para{Models and data.}
We train four instruction models from two families, Qwen3.5-9B, Qwen3.5-4B \citep{qwen_qwen35_2026}, Llama-3.1-8B and Llama-3.2-3B \citep{grattafiori_llama3_2024}, and evaluate each on five corpora, 20 \emph{settings} of model and corpus in all.
Training pools ASQA \citep{stelmakh_asqa_2022} with ALCE's top five passages \citep{gao_alce_2023} and the question-answering part of RAGTruth \citep{niu_ragtruth_2024}; we evaluate in domain on their evaluation sets.
Three further corpora, ClapNQ \citep{rosenthal_clapnq_2025}, ELI5 \citep{fan_eli5_2019} with ALCE's passages and ExpertQA \citep{malaviya_expertqa_2024}, are held out entirely, so a gain on them cannot come from fitting the training data.
On ELI5 and ExpertQA, where base models err most and copy least, a good answer has to be written, not assembled from the passages.

%% file: sections/04_analysis.tex
\section{Why Faithfulness Rewards Are Hacked}
\label{sec:analysis}

We ask what \(R\) pays for in base-model answers, what a policy trained on it learns, and what GRPO's advantage sees inside an answer; each finding yields one requirement that Section~\ref{sec:method} meets.

\begin{figure}[!h]
\centering
\includegraphics[width=\columnwidth]{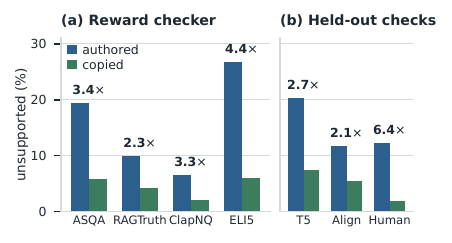}
\caption{Unsupported rate of authored and copied sentences, with their ratio. \textbf{(a)}~Base-model samples under the reward's checker (mean over models). \textbf{(b)}~The same samples under MiniCheck-Flan-T5 (T5) and AlignScore (Align), mean over models and corpora, and human labels on RAGTruth's responses.}
\label{fig:finding1}
\end{figure}

\para{Finding 1: copied sentences are rarely unsupported, so the reward pays for copying.}
We sample eight answers per question from each base model on ASQA, RAGTruth, ClapNQ and ELI5.
Under the reward's checker, authored sentences are unsupported 2.3 to 4.4 times as often as copied ones on average (Figure~\ref{fig:finding1}a), and more often for every model and corpus.
The gap is not an artifact of the checker: two held-out checkers find it almost everywhere, and so do human labels on RAGTruth's responses, 12.2\% against 1.9\% (Figure~\ref{fig:finding1}b).
With \(s_c\) and \(s_a\) the support rates of copied and authored sentences, the reward of Eq.~\ref{eq:reward} decomposes as
{\setlength{\abovedisplayskip}{6pt plus 1pt}\setlength{\belowdisplayskip}{6pt plus 1pt}%
\begin{equation}\textstyle
R \;=\; s_a \;+\; \rho\,(s_c - s_a),
\label{eq:line}
\end{equation}}%
a line in \(\rho\) with slope \(s_c-s_a>0\): at fixed authored reliability, every sentence copied instead of written raises \(R\) (proof in Appendix~\ref{app:line}).
Sampled answers agree: among answers to the same question, those that copy more earn more reward for every model and corpus (slopes in Appendix~\ref{app:finding1}).
Yet a copied sentence inherits its source's reliability, so its credit says nothing of the model's own grounding.
This gives the first requirement, \textbf{(R1)}: a copied sentence earns nothing, however well supported.

\begin{table}[!h]
\centering
\footnotesize
\setlength{\tabcolsep}{1.8pt}
\input{tables/tab_finding2}
\caption{Qwen3.5-9B trained to raise \(R\), on ASQA; metrics of Section~\ref{sec:setup}, rates in \%. \textbf{Bold}: a shortcut. Llama-3.1-8B and training curves: Appendix~\ref{app:finding2}.}
\label{tab:finding2}
\end{table}

\begin{figure}[!h]
\centering
\includegraphics[width=\columnwidth]{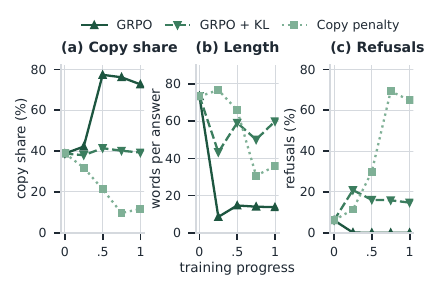}
\caption{Qwen3.5-9B on ASQA over training (0: base model): GRPO copies and nearly stops writing, and each defense swaps one shortcut for another.}
\label{fig:finding2}
\end{figure}

\para{Finding 2: training takes the shortcuts, and common defenses only move between them.}
We train Qwen3.5-9B to raise \(R\) with GRPO, alone or with one of two defenses, a KL penalty \citep{ziegler_finetuning_2019,ouyang_instructgpt_2022} or a copy penalty, and with DPO (Table~\ref{tab:finding2}).
Plain GRPO nearly saturates the reward (98.9) through the cheapest shortcuts, and the answers get worse: copy share nearly doubles within half of training, answers shrink from 74 to 14 words (Figure~\ref{fig:finding2}), and exact match falls from 45.5 to 27.6.
Each defense closes one shortcut and opens another: the KL penalty holds copying at the base level, but answers grow shorter and emptier and refusals rise, and the copy penalty cuts copying, but its vacuous share rises to 67.4\% and it declines 64.5\% of questions.
DPO copies too, even on pairs whose answers copy nearly equally: preferring better-supported answers alone pulls it toward the passages.
The two penalties take a shortcut on nine in ten model--corpus pairs (grey in Table~\ref{tab:main}).
\(R\) pays for any sentence the checker cannot reject, so penalizing one quantity only reroutes the optimizer.
This gives the second requirement, \textbf{(R2)}: a vacuous sentence earns nothing, and dropping sentences or refusing never pays.

\begin{figure*}[t]
\centering
\includegraphics[width=\textwidth]{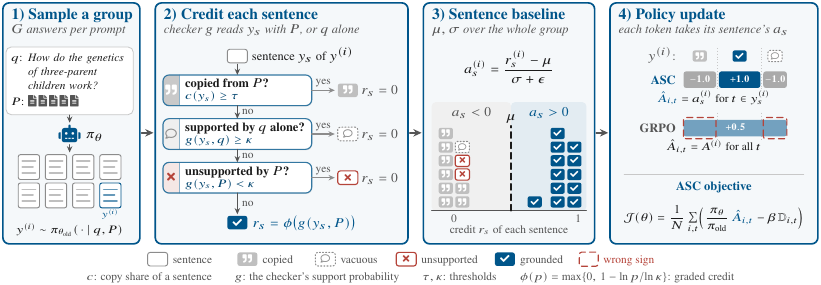}
\caption{\textbf{One \method{} step in GRPO.}
1)~The policy samples \(G\) answers to a question \(q\) with passages \(P\).
2)~A sentence earns credit only if it is authored, not vacuous and supported by \(P\).
3)~Each credit is compared with those of all sentences in the group, across answers.
4)~Every token takes its sentence's advantage; the rest of GRPO is unchanged.
Example (Qwen3.5-9B, ELI5): answer \(y^{(i)}\) earns the full reward although one sentence is copied and one vacuous; GRPO pushes all three up (\(+0.5\)), \method{} only the grounded one (\(+1.0\); the others \(-1.0\)).}
\label{fig:method}
\end{figure*}

\para{Finding 3: GRPO's advantage barely tells a copied sentence from a grounded one.}
GRPO's advantage belongs to the whole answer, so a sentence is rewarded for the answer it sits in: a copied or vacuous sentence in a high-reward answer is pushed up, and a grounded one in a low-reward answer is pushed down.
Copying therefore barely changes a sentence's chance of being reinforced: on Finding~1's samples, of the sentences GRPO moves, it pushes up 72.0\% of the copied ones and 75.5\% of the grounded ones, and even 58.8\% of the vacuous ones; grounded ones are pushed up more often everywhere, but by at most 17 points (Appendix~\ref{app:finding3}).
An answer-level advantage thus cannot steer the policy away from copying: it reinforces whatever the high-reward answers contain.
This gives the third requirement, \textbf{(R3)}: sentences are compared with sentences, so that each advantage reflects the sentence's own reliability.

%% file: tables/tab_finding2.tex
\begin{tabular}{@{}lrrrrrr@{}}
\toprule
\textbf{Training} & \textbf{Reward} & \textbf{Copy} & \textbf{Words} & \textbf{Vac.} & \textbf{Ref.} & \textbf{EM} \\
\midrule
Base model & 87.1 & 38.8 & 73.5 & 21.5 & 6.1 & 45.5 \\
GRPO & 98.9 & \textbf{72.8} & \textbf{13.7} & 18.1 & 0.0 & 27.6 \\
GRPO + KL & 92.3 & 39.2 & \textbf{60.0} & \textbf{28.5} & \textbf{14.7} & 41.8 \\
Copy penalty & 92.0 & 11.8 & \textbf{36.9} & \textbf{67.4} & \textbf{64.5} & 20.2 \\
DPO & 94.6 & \textbf{53.4} & \textbf{55.7} & 16.9 & 1.7 & 41.8 \\
DPO, copy-matched & 90.4 & \textbf{44.9} & \textbf{59.7} & 21.5 & 6.1 & 43.8 \\
\bottomrule
\end{tabular}

%% file: sections/05_method.tex
\section{\method{}: Authored-Sentence Credit}
\label{sec:method}

\method{} meets (R1)--(R3) with two changes to GRPO, which sentences earn credit (Section~\ref{sec:credit}) and what each credit is compared with (Section~\ref{sec:baseline}), and leaves the rest unchanged (Section~\ref{sec:objective}; Figure~\ref{fig:method}).

\subsection{Which sentences earn credit}
\label{sec:credit}

For a sampled answer \(y\) with sentences \(y_1,\dots,y_n\), the credit of \(y_s\) is the product of three tests,
\begin{equation}\textstyle
\mskip-8mu r_s\!=\!\underbrace{\ind[c(y_s)\!<\!\tau]}_{\text{copy}}\underbrace{\ind[g(y_s,q)\!<\!\kappa]}_{\text{vacuity}}\underbrace{\phi(g(y_s,P))}_{\text{support}}.
\label{eq:credit}
\end{equation}
The copy test meets (R1): a copied sentence owes its support to repeating \(P\), not to the policy's grounding, so the test gives it what an unsupported sentence earns.
The vacuity test meets the first half of (R2): a sentence that the checker accepts from the question alone needs no evidence, so its support is no evidence of grounding either.
The support test credits the remaining sentences with the graded credit
\begin{equation}\textstyle
\phi(p)\;=\;\max\Big\{0,\;1-\frac{\ln p}{\ln\kappa}\Big\}.
\label{eq:phi}
\end{equation}

\para{Properties of the credit.}
\(\phi\) is zero for \(p\le\kappa\) and one at \(p=1\), so the credited sentences are exactly those above the checker's threshold.
Writing \(h(p)=-\ln p\) for the checker's surprisal at ``supported'', \(\phi(p)=1-h(p)/h(\kappa)\) on \((\kappa,1]\): \(\phi\) is the only credit that is affine in surprisal, zero at the threshold and one at certainty, and it is strictly increasing and concave in \(p\), rising fastest just above the threshold (Appendix~\ref{app:credit}).
The binary credit \(\ind[p\ge\kappa]\) pays a sentence just above the threshold as much as one the checker is certain of; the graded credit keeps the checker's confidence and pulls the policy toward sentences the checker is sure of.
Every credit lies in \([0,1]\), and a sentence that fails any test earns the minimum, \(r_s=0\).

\subsection{A sentence baseline}
\label{sec:baseline}

Where GRPO standardizes answers (Section~\ref{sec:setup}), \method{} standardizes sentences.
For a group of \(G\) answers with \(M\) sentences in total, let \(\mu\) and \(\sigma\) be the mean and standard deviation of the credits of all \(M\) sentences.
Sentence \(s\) of answer \(i\) receives the advantage
{\setlength{\abovedisplayskip}{3pt plus 1pt}\setlength{\belowdisplayskip}{4pt plus 1pt}%
\setlength{\abovedisplayshortskip}{3pt plus 1pt}\setlength{\belowdisplayshortskip}{4pt plus 1pt}%
\begin{equation}\textstyle
a^{(i)}_s \;=\; \frac{r^{(i)}_s-\mu}{\sigma+\epsilon}, \qquad \epsilon>0 .
\label{eq:adv}
\end{equation}}%
Proposition~\ref{prop:adv} shows that the credit and the baseline together meet (R1)--(R3): whatever the answers contain, a sentence failing a test ranks strictly below every sentence that earns credit.

\begin{proposition}[No shortcut earns advantage]
\label{prop:adv}
For any group and the advantage of Eq.~\ref{eq:adv}:
\begin{enumerate}[label=(\roman*),nosep,leftmargin=*]
\item \emph{Credit alone sets the order}: \(a^{(i)}_s< a^{(j)}_t\) if and only if \(r^{(i)}_s< r^{(j)}_t\), for sentences of the same or of different answers (R3).
\item \emph{Shortcuts rank last}: a copied, vacuous or unsupported sentence receives the lowest advantage of its group, \(-\mu/(\sigma+\epsilon)\), negative if any sentence earns credit (R1, R2).
\item \emph{Average sentences earn nothing}: advantages sum to zero over the group, and a sentence of average credit receives zero.
\item \emph{Length-free advantages}: given the group's \(\mu\) and \(\sigma\), a sentence's advantage depends on neither its own length nor its answer's (R2).
\end{enumerate}
\end{proposition}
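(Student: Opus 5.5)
The whole proposition follows from one observation: Eq.~\ref{eq:adv} is a single affine map applied to every sentence of the group. Write \(a^{(i)}_s=f(r^{(i)}_s)\) with \(f(x)=(x-\mu)/(\sigma+\epsilon)\). Here \(\mu\) and \(\sigma\) are computed once over all \(M\) credits, and \(\sigma+\epsilon>0\) because \(\sigma\ge 0\) and \(\epsilon>0\). So \(f\) is strictly increasing, and it is the same function for every sentence regardless of which answer contains it. I will prove each item by combining this with a property of the credit from Section~\ref{sec:credit}.

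\textbf{Item (i).} For any two sentences, whether in the same answer or in different answers, \(f(x)<f(y)\) holds if and only if \(x<y\), since \(f\) is strictly increasing. This gives the equivalence at once.

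\textbf{Item (ii).} By Eq.~\ref{eq:credit}, a copied or vacuous sentence has an indicator factor equal to zero. An unsupported sentence has \(g(y_s,P)\le\kappa\), and by Eq.~\ref{eq:phi} \(\phi\) vanishes there, so \(r_s=0\) as well. All credits lie in \([0,1]\), so \(0\) is the minimum credit. By (i), a sentence with credit \(0\) attains the group's lowest advantage, \(f(0)=-\mu/(\sigma+\epsilon)\). If some sentence has positive credit, then \(\mu>0\) because all credits are nonnegative, and hence \(f(0)<0\).

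\textbf{Items (iii) and (iv).} For (iii), sum the advantages: \(\sum_{i,s}(r^{(i)}_s-\mu)=M\mu-M\mu=0\), and dividing by \(\sigma+\epsilon\) keeps the sum at zero. A sentence with \(r=\mu\) gets \(f(\mu)=0\). For (iv), observe that once \(\mu\) and \(\sigma\) are fixed, \(a^{(i)}_s\) is a function of \(r^{(i)}_s\) alone. The credit \(r^{(i)}_s\) depends only on \(c(y_s)\), \(g(y_s,q)\) and \(g(y_s,P)\), none of which carries an explicit length factor. In particular, there is no division by \(n\) as in Eq.~\ref{eq:reward}, and no term involving the answer's other sentences.

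\textbf{Main obstacle.} The only delicate point is how to read (iv). Length can still enter implicitly through the scores, for instance a long sentence may score differently under the checker. I would state explicitly that the claim is structural: the formula contains no length term, and an answer's sentence count affects a sentence's advantage only through the shared group statistics \(\mu\) and \(\sigma\), which are held fixed by hypothesis. I would also record the degenerate case where every credit is zero. Then \(\sigma=0\) and all advantages equal \(0\), so (ii) holds with the minimum value \(0\), which is consistent with the statement's condition that the advantage is negative only if some sentence earns credit.
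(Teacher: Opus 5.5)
Your proof is correct and follows the paper's own argument: the paper likewise derives (i)--(iv) from two facts, namely that Eq.~\ref{eq:adv} applies one strictly increasing affine map $T(r)=(r-\mu)/(\sigma+\epsilon)$ to every sentence of the group, and that a copied, vacuous or unsupported sentence has the minimum credit $0$. It also reads (iv) structurally, exactly as you do. Your explicit treatment of the all-zero group, where $\sigma=0$ and every advantage is $0$, is a small consistency check that the paper leaves implicit.
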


All four parts follow from two facts: Eq.~\ref{eq:adv} applies one strictly increasing affine map to every credit of the group, and a sentence that fails a test has the minimum credit.
GRPO's answer-level advantage breaks (i) and (ii), as Finding~3 measured, and (iv) as well (proofs in Appendix~\ref{app:prop}).

\subsection{Training objective}
\label{sec:objective}

Every token \(t\) of sentence \(s\) in answer \(i\) takes that sentence's advantage, \(\hat A_{i,t}=a^{(i)}_s\), in place of GRPO's \(A^{(i)}\); tokens outside every sentence, the whitespace between sentences and the end-of-sequence token, take zero.
The objective is otherwise GRPO's (Algorithm~\ref{alg:method}, Appendix~\ref{app:setup}):
\begin{equation}\textstyle
\mathcal{J}(\theta)=\frac{1}{N}\sum_{i,t}\Big(w_{i,t}(\theta)\,\hat A_{i,t}-\beta\,\mathbb{D}_{i,t}\Big),
\label{eq:objective}
\end{equation}
where \(w_{i,t}(\theta)=\pi_\theta(y^{(i)}_t\mid q,P,y^{(i)}_{<t})/\pi_{\theta_{\mathrm{old}}}(y^{(i)}_t\mid q,P,y^{(i)}_{<t})\) is the importance ratio, \(\mathbb{D}_{i,t}\) GRPO's per-token KL estimate \citep{shao_grpo_2024} toward the reference \(\pi_{\mathrm{ref}}\), the base model, and \(N\) the number of answer tokens in the batch \citep{yu_dapo_2025}.

\para{Gradient form.}
Each batch is used for one update, so the gradient is taken where \(w_{i,t}=1\), and the log-probabilities of a sentence's tokens add up to its log-likelihood.
Up to the KL term, the gradient of Eq.~\ref{eq:objective} is a policy gradient over sentences,
\begin{equation}\textstyle
\nabla\mathcal{J}=\frac{1}{N}\sum_{i,s}a^{(i)}_s\,\nabla\ln\pi_\theta\big(y^{(i)}_s\mid y^{(i)}_{<s}\big),
\label{eq:grad}
\end{equation}
with \(\nabla=\nabla_\theta\), \(y^{(i)}_{<s}\) the text before sentence \(s\) and the prompt \((q,P)\) left implicit (Appendix~\ref{app:grad}).
GRPO's gradient weights the log-likelihood of the whole answer by \(A^{(i)}\), so every sentence moves with its answer; \method{} moves each sentence by its own advantage, whatever the rest of its answer contains, which is the formal content of (R3).

\para{Relation to GRPO and to sentence rewards.}
If every answer had a single sentence, Eq.~\ref{eq:adv} would reduce to GRPO's own standardization with \(r\) as the reward, so \method{} is GRPO with a finer unit of comparison.
Without the first two tests and with the binary credit, \method{} becomes a sentence-level support reward \citep{wu_finegrained_2023,huang_citations_2024}, the sentence-reward baseline of Section~\ref{sec:exp-setup}.
The only added checker cost is one question-only call per authored sentence.
\method{} needs no critic, no reference answers and no penalty weight.

%% file: sections/06_experiments.tex
\section{Experiments}
\label{sec:experiments}

\subsection{Setup}
\label{sec:exp-setup}

\para{Training.}
All policies are trained with GRPO at full weight on the pooled ASQA and RAGTruth training questions, with \(G=8\) answers per question, a KL coefficient \(\beta=0.04\), one zero-shot prompt and the same hyper-parameters for every method (Appendix~\ref{app:setup}).
Every method uses the same reward checker, Bespoke-MiniCheck-7B \citep{tang_minicheck_2024,bespoke_minicheck_2024}, and the same thresholds, \(\kappa=\tau=\tfrac12\) and \(m=5\) words, so the methods differ only in the training signal.

\para{Baselines.}
We compare \method{} with three baselines.
\textbf{GRPO + KL} trains on the faithfulness reward \(R\) of Eq.~\ref{eq:reward}, the standard recipe.
\textbf{Copy penalty} trains on \(R-\lambda\,c(y)\), the direct fix for the copy shortcut, with \(\lambda\) set to the support gap \(s_c-s_a\) of the model's own samples on the training questions, so that by Eq.~\ref{eq:line} a copied sentence earns no more than an authored one.
\textbf{Sentence reward}, a sentence-level support reward in the style of fine-grained RLHF \citep{wu_finegrained_2023}, is \method{} without its first two tests and graded credit, so it keeps the sentence baseline.

\para{Evaluation.}
We decode greedily and evaluate on all five corpora with the metrics of Section~\ref{sec:setup}.
A policy takes a shortcut on a corpus when its copy share, vacuous share or refusal rate exceeds the base model's by more than 3 points, or its answers fall below 85\% of the base model's words.
Rows average the training seeds where a method has several (\method{} on the Qwen models, binary credit) and are single runs otherwise; Appendix~\ref{app:full} gives per-corpus results, every run and bootstrap significance tests, and Appendix~\ref{app:examples} example answers.

\begin{table*}[t]
\centering
\footnotesize
\setlength{\tabcolsep}{2.6pt}
\renewcommand{\arraystretch}{0.9}
\input{tables/tab_main}
\caption{Main results (0--100). Err: authored error rate on two in-domain (ASQA, RAGTruth) and three held-out corpora, with its mean; answer metrics: means over the five corpora; EM: exact match on ASQA. Grey Err: a shortcut on that corpus (Section~\ref{sec:exp-setup}), not ranked; grey mean: a shortcut on at least one corpus. Bold: lowest remaining Err and most supported authored sentences per answer (Supp.); underlined: second-lowest Err.}
\label{tab:main}
\end{table*}

\subsection{Main results}
\label{sec:main}

\begin{figure}[t]
\centering
\includegraphics[width=\columnwidth]{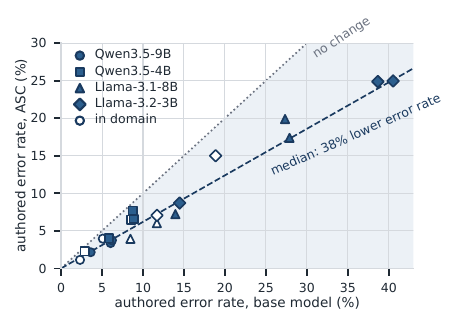}
\caption{Authored error rate of the base model against \method{}, one point per model and corpus (hollow: in domain); all lie below the diagonal (dashed: median).}
\label{fig:gain}
\end{figure}

\para{Only \method{} lowers the error rate while writing more.}
Every method lowers the authored error rate at the median, \method{} the most (38\%, against 31 to 36\%), but a model can also lower it by writing less, so Err alone does not separate them (Table~\ref{tab:main}).
What separates them is how much grounded content the model still writes (Supp.): \method{} writes the most supported authored sentences on every model, 46\% more than the base model at the median and more in every setting, whereas GRPO + KL writes fewer than the base model on three of the four models.
The lower rate is not a dilution by the extra sentences: unsupported authored sentences per answer fall too, in 16 of the 20 settings.
Figure~\ref{fig:gain} shows that the cut holds at every error level, including the hardest settings, where the base model errs on about 40\% of its sentences.
It is even larger on the held-out corpora, never seen in training, than in domain (median 38\% against 32\%).

\para{\method{} takes no shortcut; the baselines take one almost everywhere.}
In every setting \method{} copies less than the base model, writes fewer vacuous sentences and refuses less often, so none of its Err values is grey in Table~\ref{tab:main}, under any margin from 1 to 10 points.
The baselines buy their lower error rates with a shortcut, and closing one opens another, as in Finding~2: GRPO + KL declines up to 43.9\% of questions on the Qwen models, against about 5\% before training, and copies more on the Llama models; the copy penalty cuts copying but declines 79.7\% of questions on Qwen3.5-9B; and the sentence reward copies more than the base model in every setting, since scoring sentences one by one does not close the copy shortcut.
Where a baseline takes no shortcut (5 of 60), its error rate is never significantly below \method{}'s, and \method{} writes significantly more supported authored sentences in all five, so no baseline matches it even there.

\para{\method{} gains faithfulness at no cost in quality.}
Two LLM judges, Gemma-4-31B and Mistral-Small-3.2-24B \citep{mistral_small32_2025} from a third model family, pick, in both orders, the answer that serves a reader better, under a rubric fixed on base-model probes before any trained answer was judged, to favor neither copied text nor padded answers (Appendix~\ref{app:quality}).
For every baseline the faithfulness gains cost quality (Table~\ref{tab:quality}): each raises the reward over the base model, yet neither judge prefers any of them to it.
\method{} is the only method that pays nothing: both judges rate its answers on par with the base model's while they carry fewer errors and more supported sentences, and prefer them to those of every baseline.
Blind human annotators, with the same rubric, prefer \method{}'s answer to the base model's more often than the reverse (22 against 12 of 75 pairs) and GRPO + KL's (28 against 14; Appendix~\ref{app:human}).
Exact match on ASQA stays within one point of the base model for \method{}, while the copy penalty loses up to 25 points (Table~\ref{tab:main}).

\begin{table}[!h]
\centering
\footnotesize
\setlength{\tabcolsep}{3pt}
\begin{adjustbox}{max width=\columnwidth}
\input{tables/tab_quality}
\end{adjustbox}
\caption{Answer quality over the 20 settings: training reward, win rate (\%, a tie counts half) against the base model and of \method{} against each method under two judges, and coverage (\%) of the reference answer; the base model's reward is 83.5 and its coverage 45.0.}
\label{tab:quality}
\end{table}

\subsection{Each part of \method{} contributes}
\label{sec:ablation}

\begin{table}[!h]
\centering
\footnotesize
\setlength{\tabcolsep}{2.2pt}
\begin{adjustbox}{max width=\columnwidth}
\input{tables/tab_ablation}
\end{adjustbox}
\caption{Ablations on Qwen3.5-9B, one part of \method{} changed at a time: Err on each corpus, and Supp., copy and vacuous share averaged over the five corpora (grey Err: a shortcut, as in Table~\ref{tab:main}). Binary credit: \(\ind[p\ge\kappa]\) in place of \(\phi\); the sentence reward also drops both tests.}
\label{tab:ablation}
\end{table}

Table~\ref{tab:ablation} changes one part of \method{} at a time on Qwen3.5-9B: removing any part brings back a shortcut or raises the error rate on most corpora.
Without the copy test, copying returns (copy share 33.3 against 22.2, above the base model's 28.1), so the variant's lower error rates are grey, and they come with fewer supported authored sentences (2.6 against 3.0 per answer).
Without the vacuity test, vacuous sentences return (19.1\% against 13.3\%) and the error rate on held-out ExpertQA doubles, from 3.7 to 7.5.
With GRPO's whole-answer advantage in place of the sentence baseline, the policy writes more (Supp.~3.3) but less reliably: the error rate rises on four of the five corpora and the vacuous share to 18.3\%, as Finding~3 predicts.
With binary credit \(\ind[p\ge\kappa]\) in place of the graded credit \(\phi\) of Eq.~\ref{eq:phi}, the error rate rises on all five corpora at similar copy and vacuous shares: the graded credit favors sentences the checker is sure of, which the judge accepts more often.
The sentence reward, which drops both tests and the graded credit, copies more than the base model and writes nearly twice as many vacuous sentences as \method{}.

\subsection{Robustness}
\label{sec:analysis-exp}

The gain holds across runs, evaluators and training, and it does not game the copy test.
\begin{itemize}[leftmargin=*,itemsep=1pt,topsep=2pt,parsep=0pt]
\item \textbf{Runs.} Every run of \method{}, each seed of each model, has a lower authored error rate than the base model on every corpus.
\item \textbf{Evaluators.} Two checkers never used in training also find more of \method{}'s authored sentences supported than the base model's, in all 20 settings under MiniCheck-Flan-T5 and in 19 under AlignScore. With no automatic evaluator at all, the human annotators find 24.8\% of \method{}'s sentences unsupported against 33.2\% of the base model's (250 ELI5 questions, Llama-3.1-8B; \(p=0.01\)).
\item \textbf{Copy test.} \method{} does not learn to copy just below its threshold: fewer of its authored sentences sit just below it than the base model's (9.2\% against 11.3\%), and with the threshold lowered to 0.4, its share of copied sentences stays below the base model's in every setting (Appendix~\ref{app:nearcopy}).
\item \textbf{Training.} \method{} does not drift into a shortcut: on every model, its copy share drops within the first quarter of training and then stays flat, and its refusals never exceed the base model's, whereas plain GRPO ends at a copy share of 72.8 (Finding~2; curves in Figure~\ref{fig:app-dynamics-all}, Appendix~\ref{app:dynamics}).
\end{itemize}

%% file: tables/tab_main.tex
\begin{tabular}{@{}lrrrrr r rrrrr r@{}}
\toprule
\multirow{2}{*}{\textbf{Method}} & \multicolumn{6}{c}{\textbf{Authored error rate (Err) $\downarrow$}} & \multicolumn{5}{c}{\textbf{Answer metrics (mean)}} & \multirow{2}{*}{\textbf{\begin{tabular}[c]{@{}c@{}}ASQA\\EM\end{tabular}}} \\
\cmidrule(lr){2-7}\cmidrule(lr){8-12}
 & ASQA & RAGT. & ClapNQ & ELI5 & Expert & Mean & Supp.\,$\uparrow$ & Copy & Words & Vac. & Ref. &  \\
\midrule
\multicolumn{13}{@{}l}{\textit{Qwen3.5-9B}} \\
Base model & 5.09 & 2.31 & 3.61 & 6.04 & 6.21 & 4.65 & 2.17 & 28.06 & 97.40 & 27.88 & 4.94 & 45.45 \\
GRPO + KL & \shortcut{3.97} & \shortcut{1.20} & \underline{3.19} & \shortcut{5.38} & \shortcut{5.85} & \shortcut{3.92} & 2.03 & 27.17 & 83.00 & 32.23 & 23.32 & 41.78 \\
Copy penalty & \shortcut{1.45} & \shortcut{1.20} & \shortcut{0.40} & \shortcut{1.13} & \shortcut{0.42} & \shortcut{0.92} & 1.22 & 6.18 & 29.48 & 82.31 & 79.65 & 20.22 \\
Sentence reward & \shortcut{3.74} & \shortcut{0.85} & \shortcut{3.09} & \shortcut{3.51} & \shortcut{4.25} & \shortcut{3.09} & 2.16 & 30.85 & 92.08 & 25.69 & 11.78 & 44.15 \\
\rowcolor{oursrow}\shadeL \method{} & \textbf{3.96} & \textbf{1.13} & \textbf{2.15} & \textbf{3.37} & \textbf{3.74} & \textbf{2.87} & \textbf{3.04} & 22.18 & 108.85 & 13.28 & 0.22 & 44.93\shadeR \\
\midrule
\multicolumn{13}{@{}l}{\textit{Qwen3.5-4B}} \\
Base model & 8.57 & 2.87 & 5.84 & 8.84 & 8.72 & 6.97 & 2.16 & 29.84 & 103.77 & 26.31 & 5.33 & 46.24 \\
GRPO + KL & \shortcut{3.17} & \shortcut{2.03} & \shortcut{2.94} & \shortcut{3.94} & \shortcut{3.49} & \shortcut{3.11} & 1.59 & 22.67 & 59.41 & 53.77 & 43.87 & 36.23 \\
Copy penalty & \shortcut{8.17} & \shortcut{3.06} & \shortcut{4.92} & \shortcut{6.73} & \shortcut{9.07} & \shortcut{6.39} & 2.30 & 19.73 & 86.35 & 36.27 & 28.13 & 36.34 \\
Sentence reward & \shortcut{4.28} & \textbf{2.22} & \shortcut{4.17} & \shortcut{5.91} & \shortcut{5.91} & \shortcut{4.50} & 2.42 & 32.85 & 108.07 & 21.98 & 4.40 & 46.42 \\
\rowcolor{oursrow}\shadeL \method{} & \textbf{6.47} & \underline{2.25} & \textbf{4.00} & \textbf{6.58} & \textbf{7.69} & \textbf{5.40} & \textbf{2.97} & 23.98 & 110.77 & 12.89 & 0.95 & 45.67\shadeR \\
\midrule
\multicolumn{13}{@{}l}{\textit{Llama-3.1-8B}} \\
Base model & 11.69 & 8.46 & 13.95 & 27.89 & 27.34 & 17.87 & 1.06 & 37.98 & 50.41 & 24.24 & 4.65 & 35.14 \\
GRPO + KL & \shortcut{6.56} & \shortcut{6.69} & \textbf{5.58} & \shortcut{21.05} & \underline{22.48} & \shortcut{12.47} & 1.10 & 41.26 & 49.41 & 25.50 & 4.03 & 34.50 \\
Copy penalty & \shortcut{8.04} & \shortcut{6.02} & \shortcut{8.75} & \shortcut{17.63} & \shortcut{20.53} & \shortcut{12.19} & 1.41 & 15.54 & 39.79 & 29.63 & 6.96 & 28.17 \\
Sentence reward & \shortcut{4.77} & \shortcut{5.06} & \shortcut{5.53} & \shortcut{18.18} & \shortcut{19.96} & \shortcut{10.70} & 1.04 & 42.58 & 49.76 & 23.03 & 4.00 & 34.25 \\
\rowcolor{oursrow}\shadeL \method{} & \textbf{6.03} & \textbf{3.94} & \underline{7.23} & \textbf{17.38} & \textbf{19.89} & \textbf{10.89} & \textbf{1.57} & 24.77 & 55.13 & 13.71 & 0.68 & 35.61\shadeR \\
\midrule
\multicolumn{13}{@{}l}{\textit{Llama-3.2-3B}} \\
Base model & 18.87 & 11.71 & 14.47 & 38.67 & 40.53 & 24.85 & 0.93 & 40.28 & 52.58 & 20.05 & 4.27 & 34.73 \\
GRPO + KL & \shortcut{14.41} & \shortcut{5.68} & \shortcut{9.95} & \shortcut{25.42} & \shortcut{28.00} & \shortcut{16.69} & 0.89 & 49.39 & 50.98 & 20.47 & 3.20 & 32.42 \\
Copy penalty & \shortcut{10.73} & \underline{8.99} & \shortcut{9.94} & \shortcut{26.76} & \shortcut{27.06} & \shortcut{16.70} & 1.23 & 18.86 & 37.82 & 28.29 & 7.53 & 25.79 \\
Sentence reward & \shortcut{11.82} & \shortcut{7.61} & \shortcut{6.31} & \shortcut{23.84} & \shortcut{24.01} & \shortcut{14.72} & 0.93 & 49.76 & 52.31 & 17.36 & 2.00 & 33.03 \\
\rowcolor{oursrow}\shadeL \method{} & \textbf{15.00} & \textbf{7.06} & \textbf{8.70} & \textbf{24.88} & \textbf{24.96} & \textbf{16.12} & \textbf{1.57} & 24.54 & 55.86 & 14.09 & 2.47 & 35.66\shadeR \\
\bottomrule
\end{tabular}

%% file: tables/tab_quality.tex
\begin{tabular}{@{}lrrrrrr@{}}
\toprule
\multirow{2}{*}{\textbf{Method}} & \multirow{2}{*}{\textbf{Reward}} & \multicolumn{2}{c}{\textbf{Against base}} & \multicolumn{2}{c}{\textbf{\method{} against it}} & \multirow{2}{*}{\textbf{Cov.}} \\
\cmidrule(lr){3-4}\cmidrule(lr){5-6}
 & & Gemma & Mistral & Gemma & Mistral & \\
\midrule
GRPO + KL & 92.7 & 43.6 & 43.4 & 56.4 & 55.9 & 42.0 \\
Copy penalty & 90.2 & 34.8 & 30.8 & 64.6 & 68.9 & 31.4 \\
Sentence reward & 92.2 & 49.5 & 48.0 & 51.5 & 51.1 & 45.1 \\
\rowcolor{oursrow}\shadeL \method{} & 90.6 & 51.0 & 49.1 &  &  & 46.8\shadeR \\
\bottomrule
\end{tabular}

%% file: tables/tab_ablation.tex
\begin{tabular}{@{}lrrrrrrrr@{}}
\toprule
\multirow{2}{*}{\textbf{Variant}} & \multicolumn{5}{c}{\textbf{Err}} & \multicolumn{3}{c}{\textbf{Mean}} \\
\cmidrule(lr){2-6}\cmidrule(lr){7-9}
 & ASQA & RAGT. & ClapNQ & ELI5 & Expert & Supp. & Copy & Vac. \\
\midrule
\rowcolor{oursrow}\shadeL \method{} & \textbf{4.0} & \underline{1.1} & 2.2 & \textbf{3.4} & \textbf{3.7} & 3.0 & 22.2 & 13.3\shadeR \\
$-$ copy test & \shortcut{2.4} & 1.5 & \shortcut{1.6} & \shortcut{3.5} & \shortcut{3.5} & 2.6 & 33.3 & 11.2 \\
$-$ vacuity test & 5.2 & \textbf{1.0} & \textbf{1.6} & 4.6 & 7.5 & 2.9 & 21.3 & 19.1 \\
Sentence reward & \shortcut{3.7} & \shortcut{0.9} & \shortcut{3.1} & \shortcut{3.5} & \shortcut{4.3} & 2.2 & 30.9 & 25.7 \\
Answer advantage & 5.4 & 2.3 & \underline{1.9} & 5.1 & 5.1 & 3.3 & 25.1 & 18.3 \\
Binary credit & \underline{4.6} & 1.7 & 2.5 & \underline{4.1} & \underline{4.4} & 3.0 & 20.9 & 12.8 \\
\bottomrule
\end{tabular}

%% file: sections/07_conclusion.tex
\section{Conclusion}
\label{sec:conclusion}

In this work, we showed that faithfulness rewards for long-form RAG pay for copying, and that common defenses only move this shortcut to shorter, emptier answers and refusals.
We then introduced \method{}, which credits only the sentences a model authors, that the question alone does not support and that the passages do, and compares each sentence with every sentence of its group.
On four models, it lowers the authored error rate while writing more grounded sentences, held-out corpora included, without these shortcuts or any loss in answer quality.
More broadly, we suggest asking of any reward what a policy can earn without the behavior it is meant to teach.
A RAG model should be credited for what it writes, not for what it copies.

%% file: sections/08_limitations.tex
\section*{Limitations}

\method{} needs no model beyond the reward's checker: its copy test matches strings, and its vacuity and support tests reuse the checker.
This keeps the method simple and lets it adopt a stronger checker as one appears, but its tests are only as reliable as that checker, which all methods in our comparison share.
The copy test is lexical, so a close paraphrase counts as authored; \method{} writes fewer such near-copies than the base model in 17 of the 20 settings (Appendix~\ref{app:nearcopy}).
We train models of up to 9B parameters on two training corpora, with one training run for each baseline; the gain holds on all four models and on the three held-out corpora, and we leave larger models to future work.

\method{} rewards the grounded sentences a model writes, not declining, so it refuses less than the base model, also on questions the passages do not answer, where the judge prefers the base model's refusal (Appendix~\ref{app:quality}).
Pairing \method{} with a grounded-refusal objective \citep{song_trustalign_2024} is a natural next step.
Finally, \method{} targets the reliability of what a model writes itself; where an application prefers verbatim quotation, plain faithfulness rewards already deliver it.

%% file: sections/A_appendix.tex
\setcounter{topnumber}{4}\setcounter{bottomnumber}{2}\setcounter{dbltopnumber}{3}\setcounter{totalnumber}{6}
\renewcommand{\topfraction}{0.9}\renewcommand{\bottomfraction}{0.6}\renewcommand{\dbltopfraction}{0.9}
\renewcommand{\textfraction}{0.08}\renewcommand{\floatpagefraction}{0.8}\renewcommand{\dblfloatpagefraction}{0.8}

\makeatletter\setlength{\@fptop}{0pt}\setlength{\@fpbot}{0pt plus 1fil}\setlength{\@dblfptop}{0pt}\setlength{\@dblfpbot}{0pt plus 1fil}\setlength{\@fpsep}{10pt plus 2pt}\setlength{\@dblfpsep}{10pt plus 2pt}\makeatother

\section{Formal Statements and Proofs}
\label{app:formal}

This section proves the formal claims of Sections~\ref{sec:analysis} and~\ref{sec:method} in the order they appear: the reward as a line in the share of copied sentences (Eq.~\ref{eq:line}), the properties of the graded credit, Proposition~\ref{prop:adv}, and the sentence form of the gradient (Eq.~\ref{eq:grad}).
Notation follows Section~\ref{sec:setup}, and \(\nabla\) stands for \(\nabla_\theta\).

\subsection{The reward is a line in the share of copied sentences}
\label{app:line}

\begin{lemma}[Reward decomposition]
\label{lem:line}
Let an answer \(y\) have \(n\) sentences, \(n_c\) of them copied and \(n_a=n-n_c\) authored (markup-only sentences, rare in our data, are set aside), and let \(s_c\) and \(s_a\) be the shares of its copied and of its authored sentences that the checker supports (either is 0 when the answer has no sentence of that kind).
With \(\rho=n_c/n\), the reward of Eq.~\ref{eq:reward} is
\begin{equation*}
R(y)\;=\;s_a+\rho\,(s_c-s_a).
\end{equation*}
\end{lemma}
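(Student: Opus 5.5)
The plan is to rewrite the reward of Eq.~\ref{eq:reward} as a sum over the two classes of sentences and then regroup it. Every (non-markup) sentence is either copied or authored, and not both, because the class is fixed by the single test \(c(y_s)\ge\tau\). So the index set \(\{1,\dots,n\}\) is the disjoint union of the copied set \(C\) and the authored set \(A\), with \(|C|=n_c\) and \(|A|=n_a\). Splitting the sum in Eq.~\ref{eq:reward} over this partition gives
\[
R(y)=\tfrac{1}{n}\Big(\sum_{s\in C}\ind[g(y_s,P)\ge\kappa]+\sum_{s\in A}\ind[g(y_s,P)\ge\kappa]\Big).
\]

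By definition, the first inner sum equals \(n_c s_c\) and the second equals \(n_a s_a\). The convention that a share is \(0\) when its class is empty keeps both identities true in that case, since then the sum is empty and its count is zero. Dividing by \(n\), with \(\rho=n_c/n\) and \(1-\rho=n_a/n\), gives
\[
R(y)=\rho\,s_c+(1-\rho)\,s_a,
\]
a convex combination of the two support rates. Expanding and collecting the terms in \(\rho\) yields \(s_a+\rho(s_c-s_a)\), which is the claim.

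The only delicate step is the bookkeeping at the boundary cases. When \(n_a=0\) we have \(\rho=1\), and the coefficient of \(s_a\) is \(1-\rho=0\), so the conventional value of \(s_a\) does not matter. Symmetrically, when \(n_c=0\) we have \(\rho=0\), and the coefficient of \(s_c\) vanishes. I would state these two cases explicitly.

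I would also note that the markup-only sentences set aside in the statement are excluded from \(n\) itself. This keeps the partition exhaustive, so that \(n_c+n_a=n\) holds exactly. No earlier result is needed beyond the definitions of Section~\ref{sec:setup}.
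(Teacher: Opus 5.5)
Your proof is correct and takes the paper's route: you split the sum of Eq.~\ref{eq:reward} by provenance into \(n_c s_c + n_a s_a\), divide by \(n\) to get \(\rho s_c + (1-\rho) s_a\), and regroup. Your explicit handling of the empty-class conventions and of the set-aside markup sentences only spells out what the paper leaves implicit.
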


\begin{proof}
Split the sum of Eq.~\ref{eq:reward} by provenance.
The copied sentences contribute \(n_c s_c\) supported sentences and the authored ones \(n_a s_a\), so
\begin{align*}
R(y)\;&=\;\frac{n_c\,s_c+n_a\,s_a}{n}\;=\;\rho\,s_c+(1-\rho)\,s_a\\
&=\;s_a+\rho\,(s_c-s_a). \qedhere
\end{align*}
\end{proof}

At fixed support rates, \(R\) is therefore affine in \(\rho\) with slope \(s_c-s_a\).
Turning one authored sentence into a copied one raises \(\rho\) by \(1/n\) and the reward by \((s_c-s_a)/n\), although no authored sentence has become more reliable.
The slope is positive whenever copied sentences are supported more often than authored ones, which holds under the reward's checker for every base model and corpus (Table~\ref{tab:app-finding1}).

\subsection{The graded credit}
\label{app:credit}

\begin{lemma}[Graded credit]
\label{lem:credit}
Let \(\kappa\in(0,1)\) and \(\phi(p)=1-\ln p/\ln\kappa\) on \([\kappa,1]\), the branch of Eq.~\ref{eq:phi} above the threshold. Then
\begin{enumerate}[label=(\alph*),nosep,leftmargin=*]
\item \(\phi(\kappa)=0\) and \(\phi(1)=1\);
\item \(\phi\) is strictly increasing and strictly concave on \([\kappa,1]\);
\item with the surprisal \(h(p)=-\ln p\), \(\phi(p)=1-h(p)/h(\kappa)\), and \(\phi\) is the only function affine in \(h\) that satisfies (a).
\end{enumerate}
\end{lemma}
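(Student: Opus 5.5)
The plan is to verify the three parts directly from the closed form \(\phi(p)=1-\ln p/\ln\kappa\), using throughout that \(\ln\kappa<0\) for \(\kappa\in(0,1)\).

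For (a), I substitute the endpoints. At \(p=\kappa\) the ratio \(\ln\kappa/\ln\kappa\) equals one, so \(\phi(\kappa)=0\). At \(p=1\) we have \(\ln 1=0\), so \(\phi(1)=1\).

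For (b), I differentiate on \([\kappa,1]\):
\[
\phi'(p)=-\frac{1}{p\ln\kappa},\qquad \phi''(p)=\frac{1}{p^{2}\ln\kappa}.
\]
Since \(\ln\kappa<0\) and \(p>0\), we get \(\phi'>0\) and \(\phi''<0\) on the whole interval. Strict monotonicity then follows from the mean value theorem, and strict concavity from the strictly negative second derivative. As a remark, \(\phi'\) is largest at \(p=\kappa\). This is the sense of the main-text claim that \(\phi\) rises fastest just above the threshold.

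For (c), the identity \(\phi=1-h/h(\kappa)\) follows by rewriting \(\ln p=-h(p)\) and \(\ln\kappa=-h(\kappa)\); note that \(h(\kappa)>0\). For uniqueness, suppose \(\psi(p)=\alpha+\beta h(p)\) satisfies (a). The condition \(\psi(1)=1\) gives \(\alpha=1\), because \(h(1)=0\). The condition \(\psi(\kappa)=0\) then gives \(\beta=-1/h(\kappa)\), which is well defined since \(h(\kappa)\neq 0\). Hence \(\psi=\phi\).

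I expect no step to be a real obstacle. The only points needing care are, first, tracking the sign of \(\ln\kappa\) in (b), and second, noting in (c) that \(h\) is nonconstant on \([\kappa,1]\), so the two endpoint conditions pin down both affine coefficients.
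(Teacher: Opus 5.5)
Your proposal is correct and follows the paper's proof essentially step for step: endpoint substitution for (a), the same derivatives \(\phi'(p)=-1/(p\ln\kappa)>0\) and \(\phi''(p)=1/(p^{2}\ln\kappa)<0\) for (b), and for (c) the rewriting \(\ln p/\ln\kappa=h(p)/h(\kappa)\) with the two affine coefficients fixed by \(h(1)=0\) and \(h(\kappa)\neq 0\). One cosmetic point: you name the slope \(\beta\), which clashes with the paper's KL coefficient, so \(\gamma\) (as the paper uses) would be safer.
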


\begin{proof}
(a) \(\ln 1=0\) gives \(\phi(1)=1\), and \(\ln\kappa/\ln\kappa=1\) gives \(\phi(\kappa)=0\).
(b) Since \(\ln\kappa<0\),
\begin{equation*}
\phi'(p)=-\frac{1}{p\ln\kappa}>0,
\qquad
\phi''(p)=\frac{1}{p^{2}\ln\kappa}<0 .
\end{equation*}
(c) The identity follows from \(\ln p/\ln\kappa=h(p)/h(\kappa)\).
If \(f(h)=\alpha+\gamma h\) satisfies (a), then \(f(h(1))=f(0)=\alpha=1\) and \(f(h(\kappa))=1+\gamma\,h(\kappa)=0\), so \(\gamma=-1/h(\kappa)\) and \(f(h(p))=\phi(p)\).
\end{proof}

Below the threshold Eq.~\ref{eq:phi} sets the credit to zero, so a sentence is credited exactly when the checker's probability exceeds \(\kappa\); above it, the credit grows with the checker's confidence, fastest near \(\kappa\).

\subsection{Proof of Proposition~\ref{prop:adv}}
\label{app:prop}

Fix a group of \(G\) answers and index its \(M\) sentences by \(k\), across all answers.
Their credits \(r_1,\dots,r_M\in[0,1]\) have mean \(\mu\) and standard deviation \(\sigma\), and Eq.~\ref{eq:adv} gives \(a_k=T(r_k)\) with
\begin{equation*}
T(r)\;=\;\frac{r-\mu}{\sigma+\epsilon},\qquad \epsilon>0 .
\end{equation*}
The proof rests on two facts.
\textbf{(F1)} \(T\) is one map for the whole group, whichever answer a sentence belongs to, and it is strictly increasing, with slope \(1/(\sigma+\epsilon)>0\).
\textbf{(F2)} A sentence that is copied, vacuous or unsupported has \(r_k=0\), the smallest credit in \([0,1]\): Eq.~\ref{eq:credit} zeroes copied and vacuous sentences, and Eq.~\ref{eq:phi} unsupported ones.

\begin{proof}
\emph{(i) Credit alone sets the order.}
By (F1), \(T(r_k)<T(r_l)\) if and only if \(r_k<r_l\), that is \(a_k<a_l\) if and only if \(r_k<r_l\), whether sentences \(k\) and \(l\) belong to the same answer or to different ones.

\emph{(ii) Shortcuts rank last.}
By (F2) and (i), a copied, vacuous or unsupported sentence has the smallest advantage of the group, \(a_k=-\mu/(\sigma+\epsilon)\).
If any sentence of the group earns credit, then \(\mu>0\) and this advantage is negative.
A refusal earns zero whenever the checker finds it unsupported or vacuous, as for the copy penalty's refusal in Example~1 (Appendix~\ref{app:examples}).

\emph{(iii) Average sentences earn nothing.}
Since \(M\mu=\sum_k r_k\),
\begin{equation*}
\sum_{k=1}^{M}a_k\;=\;\frac{1}{\sigma+\epsilon}\Big(\sum_{k=1}^{M}r_k-M\mu\Big)\;=\;0,
\end{equation*}
and \(a_k=0\) exactly when \(r_k=\mu\), so a sentence gains only if its credit exceeds the mean.

\emph{(iv) Length-free advantages.}
Given the group's \(\mu\) and \(\sigma\), \(a_k\) depends on the sentence only through \(r_k\), which is computed from its text, the question and the passages (Eq.~\ref{eq:credit}); neither its number of tokens nor the length of its answer enters, except through \(\mu\) and \(\sigma\), which every sentence of the group shares. Part (iv) concerns the advantage; in the gradient, each token of the sentence carries it (Lemma~\ref{lem:grad}).
\end{proof}

\para{Scope.}
The proposition concerns sentences that fail a test: it does not bound how many sentences an answer has, and a paraphrase whose copy share stays below \(\tau\) counts as authored; Appendix~\ref{app:nearcopy} measures how often such sentences occur.

\begin{remark}[GRPO's answer-level advantage]
Under GRPO with the reward \(R\), every token of answer \(i\) receives \(A^{(i)}=(R_i-\mu_R)/(\sigma_R+\epsilon)\), with \(\mu_R\), \(\sigma_R\) the group's mean and standard deviation of \(R\), and (i), (ii) and (iv) fail.
Take a group of two answers: answer~1 has three supported sentences, one of them copied, so \(R_1=1\); answer~2 has two grounded sentences and one unsupported sentence, so \(R_2=2/3\).
Then \(A^{(1)}>A^{(2)}\), and the copied sentence of answer~1 is reinforced more than the two grounded sentences of answer~2.
Deleting the unsupported sentence of answer~2 raises \(R_2\) to 1, and with it the advantage of every remaining token, so writing less pays.
\end{remark}

\subsection{The gradient as a policy gradient over sentences}
\label{app:grad}

\begin{lemma}[Sentence form of the gradient]
\label{lem:grad}
With one update per batch, the gradient of the objective in Eq.~\ref{eq:objective} at \(\theta=\theta_{\mathrm{old}}\) is
\begin{align*}
\nabla\mathcal{J}\;=\;&\frac{1}{N}\sum_{i,s}a^{(i)}_s\,\nabla\ln\pi_\theta\big(y^{(i)}_s\mid y^{(i)}_{<s}\big)\\
&-\beta\,\nabla\frac{1}{N}\sum_{i,t}\mathbb{D}_{i,t},
\end{align*}
with the prompt \((q,P)\) left implicit in the conditioning, as in Eq.~\ref{eq:grad}.
Under GRPO, the first term is \(\frac{1}{N}\sum_{i}A^{(i)}\,\nabla\ln\pi_\theta(y^{(i)})\).
\end{lemma}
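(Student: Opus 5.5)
The statement to prove is Lemma~\ref{lem:grad}. The plan is to differentiate Eq.~\ref{eq:objective} term by term at \(\theta=\theta_{\mathrm{old}}\) and then regroup the token sum by sentences.

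First, the KL part \(-\beta\frac{1}{N}\sum_{i,t}\mathbb{D}_{i,t}\) is simply carried along, since its gradient appears unchanged in the statement. For the surrogate term, the advantages \(\hat A_{i,t}\) and the normalizer \(N\) are computed from sampled answers and stored credits, so they do not depend on \(\theta\). Hence
\[
\nabla\tfrac{1}{N}\textstyle\sum_{i,t}w_{i,t}\hat A_{i,t}=\tfrac{1}{N}\sum_{i,t}\hat A_{i,t}\nabla w_{i,t}.
\]
Using \(\nabla w_{i,t}=w_{i,t}\nabla\ln\pi_\theta(y^{(i)}_t\mid q,P,y^{(i)}_{<t})\) and \(w_{i,t}=1\) at \(\theta=\theta_{\mathrm{old}}\) (one update per batch), this becomes \(\frac{1}{N}\sum_{i,t}\hat A_{i,t}\nabla\ln\pi_\theta(y^{(i)}_t\mid\cdot)\). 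If clipping is present in the implementation, I would note that it is inactive at ratio \(1\), so it does not change the gradient at that point.

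Next, partition the token indices of answer \(i\) into the sentence spans plus the leftover tokens (inter-sentence whitespace and EOS). The leftover tokens carry \(\hat A_{i,t}=0\) and drop out. Within sentence \(s\), every token carries the same \(a^{(i)}_s\), so it factors out of the inner sum. By the chain rule of probability, the sum of token log-probabilities over the span equals \(\ln\pi_\theta(y^{(i)}_s\mid y^{(i)}_{<s})\), where \(y^{(i)}_{<s}\) includes the prompt and any earlier whitespace. This gives the sentence-form first term.

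For GRPO, \(\hat A_{i,t}=A^{(i)}\) is constant over all tokens of answer \(i\), EOS included. Summing the token log-probabilities over the whole answer therefore gives \(\ln\pi_\theta(y^{(i)})\), which yields the stated GRPO form.

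The main obstacle is bookkeeping rather than analysis. I need to check three things carefully:
\begin{itemize}
\item that the token spans tile the answer exactly, so that no token is counted twice or assigned to two sentences;
\item that the zero-advantage leftover tokens are still part of the conditioning context of later sentences;
\item that treating \(\hat A\) and \(N\) as constants is legitimate, i.e.\ that no gradient flows through the credits or through \(\mu\) and \(\sigma\).
\end{itemize}
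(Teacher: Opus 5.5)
Your proposal is correct and follows the paper's proof essentially step for step. At \(w_{i,t}=1\) you use \(\nabla w_{i,t}=\nabla\ln\pi_\theta(y_t\mid y_{<t})\), drop the zero-advantage tokens, factor each sentence's constant advantage out of its token sum and apply the chain rule of probability (over the whole answer, EOS included, for GRPO), carrying the KL term along unchanged; your clipping remark is harmless but unnecessary, since Eq.~\ref{eq:objective} as written has no clipping.
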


\begin{proof}
Fix an answer \(i\) and drop its index.
At \(\theta=\theta_{\mathrm{old}}\) the importance ratio is \(w_t=1\), so \(\nabla w_t=w_t\,\nabla\ln\pi_\theta(y_t\mid y_{<t})=\nabla\ln\pi_\theta(y_t\mid y_{<t})\).
Every token \(t\) of sentence \(s\) carries \(\hat A_t=a_s\), and tokens outside every sentence carry zero and drop out, so grouping the tokens by sentence gives
\begin{align*}
&\sum_{t}\hat A_{t}\,\nabla\ln\pi_\theta(y_t\mid y_{<t})\\
&\qquad=\sum_{s}a_s\sum_{t\in s}\nabla\ln\pi_\theta(y_t\mid y_{<t})\\
&\qquad=\sum_{s}a_s\,\nabla\ln\pi_\theta(y_s\mid y_{<s}),
\end{align*}
where the last step is the chain rule of probability: the log-probabilities of a sentence's tokens add up to the log-probability of the sentence given everything before it.
Under GRPO every token of answer \(i\) carries the same \(A^{(i)}\), and the same grouping over the whole answer gives \(A^{(i)}\,\nabla\ln\pi_\theta(y^{(i)})\).
The KL term does not depend on the advantages and is the same for every method.
\end{proof}

\section{Setup Details}
\label{app:setup}

This section gives what is needed to reproduce the experiments: the data, the sentence tests and the metrics built on them, and the training settings.
The prompts are listed in Appendix~\ref{app:prompts}.

\begin{table}[!htb]
\centering
\footnotesize
\begin{adjustbox}{max width=\columnwidth}
\input{tables/app_data}
\end{adjustbox}
\caption{The five corpora. Train: training questions; Eval: evaluation questions for the 9B and 8B models; Eval (small): those for the 4B and 3B models; Passages: passages shown with each question.}
\label{tab:app-data}
\end{table}

\para{Data.}
\begin{itemize}[leftmargin=*,itemsep=1pt,topsep=2pt,parsep=0pt]
\item \emph{Training.} ASQA and RAGTruth are pooled for training (3{,}689 questions) and evaluated on their own evaluation sets, ALCE's 948 ASQA questions and RAGTruth's QA test sources.
\item \emph{Held out.} ClapNQ (its answerable development questions), ELI5 and ExpertQA are never seen in training. ELI5 is known to be hard to ground in retrieved documents \citep{krishna_hurdles_2021}, and ExpertQA was added only after every training run had finished, so nothing was tuned on it.
\item \emph{No overlap.} Training questions are disjoint from every evaluation set by identifier, source document and normalized question text.
\item \emph{Size.} The two smaller models are evaluated on at most 300 questions per corpus, to bound the cost of the judge.
\end{itemize}

\para{Sentence tests.}
\method{} and the metrics share three sentence tests; the reward uses only the second.
\begin{itemize}[leftmargin=*,itemsep=1pt,topsep=2pt,parsep=0pt]
\item \emph{Copy.} Answer and passages are lowercased and split into words. A word of the answer is covered if some window of five consecutive answer words containing it also occurs in the passages, and the copy share is the covered fraction. A sentence is copied if the copy share of its text, with list markers, headings and emphasis stripped, is at least \(\tau=0.5\); a sentence made only of such markup is neither copied nor authored and earns no credit.
\item \emph{Support.} A sentence is supported when the checker's probability is at least \(\kappa=0.5\). Long passages are split into chunks aligned with sentences, and the maximum over chunks is taken.
\item \emph{Vacuity.} A sentence is vacuous when the support check passes with the question in place of the passages, a context ablation in the spirit of conditional mutual information \citep{vanderpoel_cpmi_2022} and SelfCite \citep{pmlr-v267-chuang_selfcite_2025}.
\end{itemize}
Bespoke-MiniCheck-7B \citep{bespoke_minicheck_2024} is the checker of the reward and of \method{}; MiniCheck-Flan-T5-Large \citep{tang_minicheck_2024} and AlignScore \citep{zha_alignscore_2023} are the held-out checkers, used only in evaluation.

\begin{table}[!htb]
\centering
\footnotesize
\setlength{\tabcolsep}{3pt}
\begin{tabular}{@{}p{0.2\columnwidth}p{0.5\columnwidth}p{0.22\columnwidth}@{}}
\toprule
\textbf{Metric} & \textbf{Definition} & \textbf{Instrument} \\
\midrule
Err & authored sentences judged unsupported, over authored sentences & Gemma-4-31B \\
T5, AS & authored sentences supported, over authored sentences & MiniCheck-Flan-T5, AlignScore \\
Supp. & supported authored sentences per answer & MiniCheck-Flan-T5 \\
Copy & answer words inside a verbatim five-word match with the passages, over words & string match \\
Vac. & supported authored sentences also supported by the question alone, over supported authored sentences & Bespoke-MiniCheck-7B \\
Ref. & answers whose every sentence declines to answer, over answers & rule-based \\
Words & words per answer & -- \\
Reward & sentences supported, over sentences (Eq.~\ref{eq:reward}) & Bespoke-MiniCheck-7B \\
No auth. & answers without an authored sentence, over answers & string match \\
EM & short answers of ASQA found in the answer & ALCE string match \\
\bottomrule
\end{tabular}
\caption{The metrics of Tables~\ref{tab:main} and~\ref{tab:ablation} and of the appendix, on a 0--100 scale except counts per answer. Only Bespoke-MiniCheck-7B is also used in training.}
\label{tab:app-metrics}
\end{table}

\para{Metrics.}
The judge, Gemma-4-31B \citep{gemma_gemma4_2026} at full precision, reads the passages and one authored sentence (prompt in Appendix~\ref{app:prompts}); a sentence is an error when the judge rejects it outright, with a probability of ``Yes'' below \(3.5\times10^{-10}\).
Its probabilities are nearly binary (46\% of sentences below 0.001 and 42\% above 0.999), and this rule, chosen on one half of RAGTruth's human-labeled responses and tested on the other, reaches a balanced accuracy of 0.81 against the human labels, above the 0.75 of the reward's checker.
Copying counts as a shortcut even though Err leaves copied sentences out: a policy that copies more submits fewer of its own sentences to the judge, so Err can fall without any sentence becoming more reliable, which Supp.\ makes visible.
An answer refuses only when every sentence declines to answer, so a single hedge is not a refusal.

\begin{table}[!htb]
\centering
\footnotesize
\input{tables/app_hparams}
\caption{GRPO settings of every method and model.}
\label{tab:app-hparams}
\end{table}

\para{Training.}
All methods share the settings of Table~\ref{tab:app-hparams}: one epoch over the pooled training questions (922 steps, 32 answers per step), with rollouts sampled at the listed temperature; they differ only in the training signal, and Algorithm~\ref{alg:method} gives one \method{} update.
Evaluation decodes greedily with the same token budget.
For the copy penalty, \(\lambda\) is the support gap \(s_c-s_a\) of the model's own samples on the training questions under the reward's checker: 0.071 for Qwen3.5-9B, 0.087 for Qwen3.5-4B, 0.077 for Llama-3.1-8B and 0.123 for Llama-3.2-3B.
The sentence reward uses the sentence baseline of Eq.~\ref{eq:adv} with \(r_s=\ind[g(y_s,P)\ge\kappa]\) for every sentence of the answer.
\method{} is trained with two seeds on Qwen3.5-9B and three on Qwen3.5-4B, and the binary-credit ablation with three seeds on Qwen3.5-9B; every other policy is one training run.
Everything runs on eight H100 80GB GPUs: one training run on a 9B model takes about 3.5 hours on six of them, with the checker served on the remaining two, and evaluating one policy on all corpora, including the judge, takes about two hours on two.

\para{Penalty strengths.}
Neither penalty is tuned on the evaluation corpora.
The KL coefficient \(\beta=0.04\) is GRPO's default \citep{shao_grpo_2024}, and no other value removes the shortcuts: the KL-regularized objective is maximized by \(\pi^{*}(y)\propto\pi_{\mathrm{ref}}(y)\exp(R(y)/\beta)\) \citep{rafailov_dpo_2023}, which favors whatever raises \(R\) at every \(\beta>0\), so a larger \(\beta\) only shrinks the reward gain together with the shortcut.
The copy penalty's \(\lambda\) is the smallest value at which, by Eq.~\ref{eq:line}, a copied sentence earns no more than an authored one; a larger \(\lambda\) would only push the policy further from the passages, which its refusals already show.

\begin{algorithm}[!htb]
\small
\caption{One \method{} update for a prompt \((q,P)\)}
\label{alg:method}
\begin{algorithmic}[1]
\Require policy \(\pi_\theta\), reference \(\pi_{\mathrm{ref}}\), checker \(g\), thresholds \(\tau,\kappa\), group size \(G\)
\State Sample \(y^{(1)},\dots,y^{(G)}\sim\pi_{\theta_{\mathrm{old}}}(\cdot\mid q,P)\)
\For{each sentence \(y^{(i)}_s\) of each answer \(y^{(i)}\)}
  \If{\(c(y^{(i)}_s)\ge\tau\) \textbf{or} \(g(y^{(i)}_s,q)\ge\kappa\)}
    \State \(r^{(i)}_s\gets 0\) \Comment{copied or vacuous}
  \Else
    \State \(r^{(i)}_s\gets\phi\big(g(y^{(i)}_s,P)\big)\) \Comment{graded support}
  \EndIf
\EndFor
\State \(\mu,\sigma\gets\) mean, s.d.\ of all credits in the group
\State \(\hat A_{i,t}\gets(r^{(i)}_s-\mu)/(\sigma+\epsilon)\) for each \(t\in y^{(i)}_s\)
\State \(\theta\gets\theta+\eta\,\nabla_\theta\mathcal{J}(\theta)\) \Comment{Eq.~\ref{eq:objective}, step size \(\eta\)}
\end{algorithmic}
\end{algorithm}

\section{Analysis Details}
\label{app:analysis}

This section gives the evidence behind the three findings of Section~\ref{sec:analysis}, one subsection each.

\subsection{Copied versus authored sentences (Finding 1)}
\label{app:finding1}

\begin{table}[!htb]
\centering
\footnotesize
\begin{adjustbox}{max width=\columnwidth}
\input{tables/app_finding1}
\end{adjustbox}
\caption{Unsupported rate (\%) of authored (Auth.) and copied sentences in base-model samples (eight answers per question) under the reward's checker and the two held-out checkers of Appendix~\ref{app:setup}.}
\label{tab:app-finding1}
\end{table}

Under the reward's checker, authored sentences are unsupported 1.8 to 8.1 times as often as copied ones for every model and corpus, most on ELI5, where base models write most of the answer themselves.
The held-out checkers find the same gap, MiniCheck-Flan-T5 everywhere and AlignScore in all but one case, so it is not a quirk of one checker: entailment models accept hypotheses that overlap their premise \citep{mccoy_hans_2019}, and a copied sentence overlaps it entirely.
Human span labels on RAGTruth (131{,}087 sentences, no checker involved) agree: copied sentences carry errors at 1.9\% and authored ones at 12.2\%, and the error rate falls steadily with copy share, from 13.7\% in the lowest decile to 0.6\% in the highest.

\para{Within a question.}
Real samples follow Lemma~\ref{lem:line}: among the answers to the same question, the reward rises with the copy share \(c(y)\) for every model and corpus.
The pull acts even before training: picking the highest-reward answer of the eight raises the copy share for every model and corpus.

\begin{table}[!htb]
\centering
\footnotesize
\begin{adjustbox}{max width=\columnwidth}
\input{tables/app_paraphrase}
\end{adjustbox}
\caption{Authored, non-vacuous sentences of Qwen3.5-9B and Llama-3.1-8B base-model samples on ASQA and RAGTruth, by tercile of ROUGE-L similarity to their closest passage sentence: mean similarity, count, and share (\%) supported under each checker.}
\label{tab:app-paraphrase}
\end{table}

\para{Closeness to the passages.}
The preference does not hinge on the copy test's five-word cut-off: both checkers accept an authored sentence more often the closer it stays to its closest passage sentence (Table~\ref{tab:app-paraphrase}), so the reward's preference for copying is a property of the checker, not of the threshold.

\subsection{Training on the faithfulness reward (Finding 2)}
\label{app:finding2}

\begin{table}[!htb]
\centering
\footnotesize
\begin{adjustbox}{max width=\columnwidth}
\input{tables/app_finding2}
\end{adjustbox}
\caption{Policies trained on \(R\), evaluated on ASQA at the final checkpoint. T5: authored support under MiniCheck-Flan-T5; No auth.: answers without an authored sentence; other columns as in Table~\ref{tab:finding2}. DPO, copy-matched: pairs whose two answers copy equally.}
\label{tab:app-finding2}
\end{table}

\begin{figure*}[t]
\centering
\includegraphics[width=\textwidth]{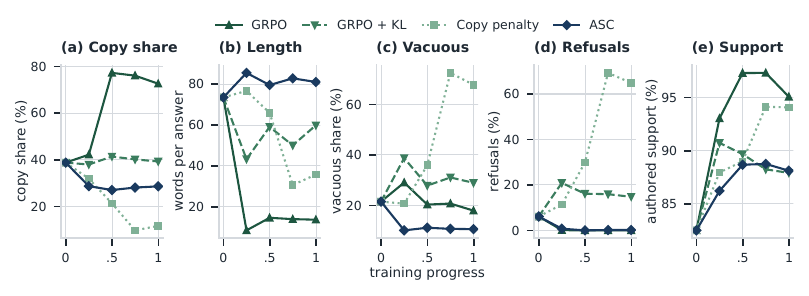}
\caption{Qwen3.5-9B on ASQA over training (0: base model): every metric of Figure~\ref{fig:finding2}, with vacuous share and authored support under MiniCheck-Flan-T5, and \method{} for comparison. Policies trained on \(R\) raise authored support by copying (GRPO) or by writing less and refusing (KL, copy penalty); \method{} copies less and writes longer answers.}
\label{fig:app-dynamics}
\end{figure*}

Every policy trained on \(R\) raises the reward through a shortcut rather than through better authored sentences (Table~\ref{tab:app-finding2}): authored support rises too, but on fewer authored sentences.
Plain GRPO copies until 81.0\% of its answers contain no authored sentence; GRPO + KL refuses and writes emptier answers; DPO copies more even when the two answers of every pair copy equally.
On Llama-3.1-8B, plain GRPO settles on a vacuous template whose most frequent sentences are ``A film was made.'' and ``A song was released.'', which earns a reward of 99.8 at an exact match of 0.5.

\para{Over training.}
Shortcuts set in early and persist (Figure~\ref{fig:app-dynamics}): GRPO's answers shrink within the first quarter of training and its copy share passes 70 by mid-training, and the copy penalty's refusals climb to 69.3\% by three quarters of training, while \method{} brings copy share and refusals down within the first quarter and keeps them there.

\para{Runs.}
Plain GRPO is GRPO + KL with \(\beta=0\).
The DPO policies are trained for one epoch on pairs of base-model samples ranked by \(R\); in the copy-matched pairs, the two answers' copy shares differ by at most 0.02 and their sentence counts by at most one.
Table~\ref{tab:finding2}, Table~\ref{tab:app-finding2} and the GRPO point of Figure~\ref{fig:teaser}b come from the evaluation pass of these runs, which decoded the base model separately from the runs of Section~\ref{sec:experiments}, so their base values differ slightly from Tables~\ref{tab:app-full} and~\ref{tab:app-shortcuts} (by at most 2.7 points, Llama-3.1-8B's vacuous share on ASQA).

\subsection{Whole-answer and sentence advantages (Finding 3)}
\label{app:finding3}

\begin{table}[!htb]
\centering
\footnotesize
\begin{adjustbox}{max width=\columnwidth}
\input{tables/app_finding3}
\end{adjustbox}
\caption{How GRPO's answer-level advantage moves each kind of sentence, on the base-model samples of Finding~1. Kinds, in the order of \method{}'s tests: copied; vacuous (authored, supportable from the question alone); grounded (authored, not vacuous, supported by the reward's checker); authored, unsupported. Count: sentences of that kind. Up, Down: the answer's reward is above, below its group's mean; Unmoved: equal to it, so the advantage is zero. Pushed up, if moved: the share pushed up among the sentences GRPO moves, pooled and its range over models and corpora.}
\label{tab:app-finding3}
\end{table}

\para{Signs of GRPO's advantage.}
Every sentence takes the sign of its answer's advantage, so Table~\ref{tab:app-finding3} counts how GRPO moves each kind of sentence.
It leaves a third of all sentences, and half of the copied ones, unmoved, because their answer's reward equals the group's mean; among the sentences it moves, it pushes up copied and grounded ones at nearly the same rate, and most vacuous ones too.
Under \method{}, a copied, vacuous or unsupported sentence always takes the lowest advantage of its group (Proposition~\ref{prop:adv}(ii)), so none is pushed up.

\para{Correlation with held-out support.}
On the same pool as Table~\ref{tab:app-paraphrase} (21{,}970 authored, non-vacuous sentences in 1{,}967 groups), we compute two advantages from the binary form of the credit in Eq.~\ref{eq:credit}: GRPO's answer-level advantage, with the credit summed over the answer, and the sentence advantage of Eq.~\ref{eq:adv}.
Their correlation with the support of the authored sentences under MiniCheck-Flan-T5 is 0.07 and 0.21: the same credit tracks a sentence's reliability far more closely when sentences are compared with sentences rather than answers with answers.
Answer length explains less than 2\% of the variance of either advantage, so the difference does not come from length.

\section{Full Results}
\label{app:full}

This section expands Table~\ref{tab:main}: every metric on every corpus, counts per answer with their significance, every training run, the robustness of the shortcut rule, near-copies, and how each model trains.
Significance marks test a difference with a paired bootstrap over questions (1{,}000 resamples, seeds pooled per question); they cover the sampling of questions, not of training runs (Appendix~\ref{app:seeds} lists the runs), and each comparison is tested on its own.

\subsection{Every metric on every corpus}
\label{app:per-corpus}

\begin{table*}[t]
\centering
\footnotesize
\setlength{\tabcolsep}{3pt}
\begin{adjustbox}{max width=\textwidth}
\input{tables/app_full}
\end{adjustbox}
\caption{Faithfulness on every corpus: authored error rate under the judge (Err; grey: a shortcut, not ranked; bold, underlined: lowest and second-lowest remaining Err, as in Table~\ref{tab:main}), authored support under MiniCheck-Flan-T5 (T5) and AlignScore (AS), and supported authored sentences per answer (Supp.); --: not scored.}
\label{tab:app-full}
\end{table*}

\begin{table*}[t]
\centering
\footnotesize
\setlength{\tabcolsep}{3pt}
\begin{adjustbox}{max width=\textwidth}
\input{tables/app_shortcuts}
\end{adjustbox}
\caption{Shortcut metrics on every corpus: copy share, words per answer, vacuous share (Vac.) and refusal rate (Ref.). A policy takes a shortcut on a corpus when its copy share, vacuous share or refusal rate exceeds the base model's by more than 3 points, or its words fall below 85\% of the base model's.}
\label{tab:app-shortcuts}
\end{table*}

\para{Faithfulness.}
The one setting in which AlignScore does not confirm the gain of Section~\ref{sec:analysis-exp} is a 0.2-point dip (Qwen3.5-9B on ELI5).
The absolute gain is largest where the base model errs most, up to 15.6 points for Llama-3.2-3B on ExpertQA, which is the same relative cut as elsewhere (38\%, Figure~\ref{fig:gain}).

\para{Shortcuts.}
The baselines' shortcuts concentrate on ELI5 and ExpertQA, the held-out corpora where the base model errs most and an answer must be written rather than assembled (Section~\ref{sec:setup}).
There GRPO + KL declines 44 to 76\% of the questions on the Qwen models, against at most 28\% on the other corpora, and the copy penalty declines 94 to 95\% on Qwen3.5-9B.
On the Llama models, GRPO + KL and the sentence reward copy more than the base model on every corpus, and the copy penalty shortens the answers instead, to 56 to 89\% of the base model's words.
\method{} does the opposite: it copies less, writes fewer vacuous sentences and refuses less than the base model on every corpus, and writes longer answers on every held-out corpus.

\subsection{Counts per answer and significance}
\label{app:counts}

\begin{table}[!htb]
\centering
\footnotesize
\setlength{\tabcolsep}{2.4pt}
\begin{adjustbox}{max width=\columnwidth}
\input{tables/app_answer_counts}
\end{adjustbox}
\caption{Base model against \method{} on every corpus (\method{}'s seeds pooled per question): authored error rate (Err), unsupported and supported authored sentences per answer under the judge (Unsupp.\,(J), Supp.\,(J)), and the share (\%) of all sentences, copied ones included, that MiniCheck-Flan-T5 does not support (Whole). $^{*}$ (on Err, Unsupp.\ and Whole): the 95\% bootstrap interval over questions of the difference from the base model excludes zero. Pooling seeds per question makes Err differ slightly from Table~\ref{tab:main}.}
\label{tab:app-counts}
\end{table}

Err is a rate over authored sentences, and \method{} writes more of them, so a lower rate could hide more errors per answer; Table~\ref{tab:app-counts} therefore counts them.
Err is lower in every setting, significantly in 14 of the 20.
Per answer, \method{} writes 0.3 to 1.0 more judge-supported authored sentences, and its unsupported ones fall in 16 settings and rise significantly in none.
Counting copied sentences too, the unsupported share of the whole answer falls in 18 settings and rises significantly in none, so answers become more faithful as a whole.
Within bins of answer length, \method{} still writes more supported authored sentences in all 20 settings, so the added content is not a by-product of length.

\subsection{Every training run}
\label{app:seeds}

\begin{table}[!htb]
\centering
\footnotesize
\begin{adjustbox}{max width=\columnwidth}
\input{tables/app_seeds}
\end{adjustbox}
\caption{Authored error rate of every \method{} training run on every corpus, with the base model above.}
\label{tab:app-seeds}
\end{table}

Table~\ref{tab:app-seeds} lists every \method{} training run separately.
Each run has a lower authored error rate than the base model on every corpus it was evaluated on, so no single seed carries the gain.

\subsection{Robustness of the shortcut rule}
\label{app:rule}

\begin{table}[!htb]
\centering
\footnotesize
\begin{adjustbox}{max width=\columnwidth}
\input{tables/app_sensitivity}
\end{adjustbox}
\caption{Table~\ref{tab:main} under other shortcut rules. Margin: points above the base model at which copy share, vacuous share or refusals flag a shortcut. \method{} flagged: settings (of 20) in which \method{} takes a shortcut. \method{} lowest Err: settings in which it has the lowest Err among policies without a shortcut, for word floors of 75\%, 85\% and 95\% of the base model's words. Baseline cells flagged: baseline Err cells (of 60) with a shortcut, floor 85\%. Table~\ref{tab:main} uses a margin of 3 and a floor of 85\%.}
\label{tab:app-sensitivity}
\end{table}

The shortcut rule of Section~\ref{sec:exp-setup} has two free choices, the margin and the word floor, and Table~\ref{tab:app-sensitivity} recomputes the ranking of Table~\ref{tab:main} under other values of both.
\method{} is flagged in no setting under any of them.
The number of settings in which it has the lowest Err falls at wider margins, mostly because they stop flagging the sentence reward, which copies a median 3.8 points more than the base model and 14.0 more than \method{}.
Without any threshold the picture is the same: in 19 of the 20 settings, no baseline has both a lower Err and at least as many supported authored sentences per answer as \method{}; the exception, the copy penalty on ASQA for Llama-3.2-3B, writes answers half as long.
Where a baseline takes no shortcut, in 5 of its 60 results, \method{}'s error rate is lower in one (Llama-3.1-8B on ExpertQA, 19.9 against 22.5 for GRPO~+~KL) and no different in the other four, while \method{} writes significantly more judge-supported authored sentences in all five (0.4 to 0.8 per answer).

\subsection{Near-copies}
\label{app:nearcopy}

A policy trained against a copy test could learn to copy just below its threshold. Section~\ref{sec:analysis-exp} rules this out for verbatim copying (just below: a copy share of 0.40 to 0.50), and with the threshold lowered to 0.4, \method{} also keeps its lower error rate.
Beyond verbatim matches, an authored sentence of \method{} is as close to its closest passage sentence as one of the base model (median ROUGE-L precision 0.42 for both; \citealp{lin_rouge_2004}), and \method{} writes fewer near-copies, with a ROUGE-L of at least 0.8, in 17 of the 20 settings.
Within terciles of closeness, its authored error rate is below the base model's in 58 of the 60 tercile and setting pairs, so its gain is not a product of closeness.

\subsection{Training dynamics}
\label{app:dynamics}

\begin{figure*}[t]
\centering
\includegraphics[width=\textwidth]{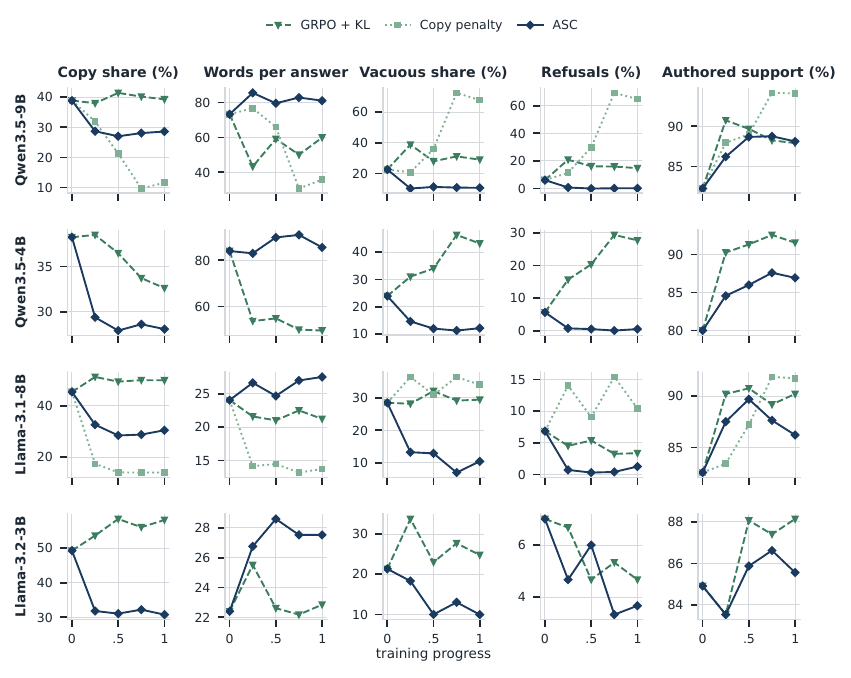}
\caption{All four models on ASQA over training (0: base model): GRPO + KL, \method{}, and the copy penalty where its intermediate checkpoints were evaluated.}
\label{fig:app-dynamics-all}
\end{figure*}

Figure~\ref{fig:app-dynamics-all} follows GRPO + KL and \method{} through training on every model, and the copy penalty on Qwen3.5-9B and Llama-3.1-8B.
The baselines' shortcuts build up over training: GRPO + KL's refusals rise from the first checkpoint on the Qwen models, up to 29\% of answers for Qwen3.5-4B, and its copy share rises on the Llama models, while the copy penalty's answers shrink as its refusals and vacuous share grow.
Their authored support ends at or above \method{}'s, but over the fewer sentences they still write.
\method{} settles early on every model: its copy share falls within the first quarter of training and then stays flat, its vacuous share falls by mid-training, and it ends with longer answers and fewer refusals than the base model.

\section{Answer Quality}
\label{app:quality}

Section~\ref{sec:main} compares whole answers with LLM judges.
This section gives the protocol, how its rubric was chosen without favoring any method, the result in every setting, a second judge, and checks that need no judge preference: reference coverage, refusals and answerable questions.

\para{Protocol.}
The judge is Gemma-4-31B, the judge of the authored error rate.
It sees the question, the passages and two answers and returns the answer that serves a reader better, or a tie, with a one-sentence reason (prompt in Appendix~\ref{app:prompts}).
Every pair is judged in both orders; a pair is a win only if both orders agree, any other outcome is a tie, and a tie counts half in the win rate.
We compare \method{} (first seed) with the base model and each baseline, and each baseline with the base model, on up to 300 questions per setting, about 5{,}400 pairs each.

\begin{table}[!htb]
\centering
\footnotesize
\begin{tabular}{@{}lccc@{}}
\toprule
\multirow{2}{*}{\textbf{Probe} (ideal value)} & \multicolumn{3}{c}{\textbf{Rubric}} \\
\cmidrule(lr){2-4}
 & 1 & \textbf{2} & 3 \\
\midrule
Copied form preferred (50) & 44.8 & 47.2 & 45.9 \\
Padded answer preferred (\(\le\)50) & 3.8 & 15.5 & 11.5 \\
Intact answer preferred (high) & 99.3 & 99.3 & 99.7 \\
\bottomrule
\end{tabular}
\caption{Probes (\%) for the three candidate rubrics, built from base-model answers only: two biases (copied form, padding) and the sensitivity to a planted error. Rubric 2 (bold) is the one used.}
\label{tab:app-probes}
\end{table}

\para{Choosing the rubric.}
LLM judges favor longer answers and a fixed position \citep{zheng_judge_2023,wang_fair_2024}, and a judge that favored quoted or long text would rank methods by style.
We wrote three rubrics and measured each on probes built from base-model answers only (Table~\ref{tab:app-probes}): the same content as a verbatim passage sentence or in the model's own words (404 pairs), an answer padded with a restated sentence (200 pairs), and an answer with one supported sentence replaced by an unsupported one (153 pairs), which a useful judge must notice.
The rule, fixed in advance, keeps the rubrics that prefer the intact answer at least 80\% of the time, takes the one closest to 50\% on the copy probe, and breaks near-ties by the smaller preference for padding.
The chosen rubric states that quoting is neither a merit nor a fault and that length is no merit, and counts a refusal as right only when the passages lack the answer; it was fixed, and its hash recorded, before any answer of a trained model was judged.

\begin{table*}[t]
\centering
\footnotesize
\setlength{\tabcolsep}{5pt}
\begin{adjustbox}{max width=\textwidth}
\input{tables/app_quality}
\end{adjustbox}
\caption{Win rate (\%) of \method{} against the base model and each baseline in every setting under Gemma-4-31B, as judged and controlled for the difference in words, and reference coverage (\%) of the base model and \method{} (--: no reference answers). Bold: the 95\% bootstrap interval over questions lies above 50 (\method{} preferred), or for coverage, \method{}'s difference from the base model lies above zero; italic: below. \method{}: first seed.}
\label{tab:app-quality}
\end{table*}

\begin{figure}[!htb]
\centering
\includegraphics[width=\columnwidth]{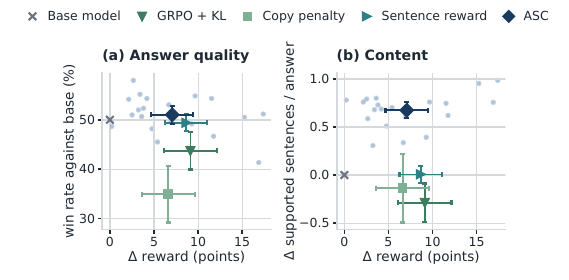}
\caption{Gain in the training reward over the base model against (a)~the win rate against the base model and (b)~the change in judge-supported authored sentences per answer. Markers: mean over the 20 settings with 95\% intervals; light points: \method{} in each setting. Every method raises the reward, but only \method{} keeps the base model's quality and adds content.}
\label{fig:app-reward-quality}
\end{figure}

\para{Preference.}
Table~\ref{tab:app-quality} gives every setting. Significant verdicts favor \method{} more often than the other method in every comparison: over the base model in 5 settings (the base model in 2), and over the copy penalty, GRPO + KL and the sentence reward in 14, 9 and 3 (each of them in at most 2).
Figure~\ref{fig:app-reward-quality} adds the spread over the 20 settings to Table~\ref{tab:quality}, and the change in supported authored sentences: only \method{} adds supported content.
Controlled for length, by a logistic fit of the verdict on the difference in words, its win rates are 48.9\% against the base model, 50.1\% against GRPO~+~KL, 54.5\% against the copy penalty and 49.2\% against the sentence reward; this control also removes the credit for the extra supported sentences, which come with more words, so it likely understates \method{}'s win rates.
A second seed on Qwen3.5-9B gives the same ordering (52.1\% and 56.4\% against the base model and GRPO + KL).

\para{Second judge.}
Mistral-Small-3.2-24B-Instruct, from a third model family, judges the same pairs with the same rubric and ranks the methods as Gemma does (Table~\ref{tab:quality}), although its copy probe shows a bias against \method{} (it prefers the copied form in 62.9\% of the pairs).
On the pairs that both judges decide, 45\% of all pairs, they agree on 81.7\% (\(\kappa\) 0.61).

\para{Reference coverage.}
ASQA, ClapNQ and ELI5 come with reference answers; the judge splits each reference into key points and checks which ones an answer conveys.
\method{} covers the most key points on average (Table~\ref{tab:quality}), significantly more than the base model in 4 of the 12 settings and less in none.
The NLI model TRUE \citep{honovich_true_2022}, which is not an LLM judge, agrees with Gemma on 90.7\% of the key points (\(\kappa\) 0.80) and gives the same direction (+1.3 points).

\para{Refusals.}
Refusing is right when the passages lack the answer, and the base model mostly refuses for that reason: the judge finds that the passages do not answer 85.4\% of its 575 refusals.
GRPO + KL, the sentence reward and \method{} also refuse mostly on such questions (79.0\%, 93.7\% and 88.8\% of their refusals), but the copy penalty does not: 47.2\% of its 3{,}821 refusals fall on questions the passages answer fully or in part.
\method{} answers most of the questions the base model rightly refuses, with an authored error rate of 5.5\%, and there the judge prefers the refusal (\method{} wins 32.3\% of 181 pairs); where both models answer, \method{} wins 51.5\%.

\begin{table}[!htb]
\centering
\footnotesize
\setlength{\tabcolsep}{3pt}
\input{tables/app_answerable_asc}
\caption{Base model against \method{} on the questions whose passages contain the answer: questions, authored error rate (Err) and supported authored sentences per answer (Supp.). $^{*}$: a significant difference from the base model (bootstrap over questions, Appendix~\ref{app:full}).}
\label{tab:app-answerable}
\end{table}

\para{Answerable questions.}
On the questions whose passages contain the answer (labeled by the judge; Mistral agrees on 84\% of 300 sampled questions), \method{} has a lower error rate than the base model in 19 of the 20 settings and writes significantly more supported authored sentences in all 20 (Table~\ref{tab:app-answerable}).

\section{Ablation Details}
\label{app:ablation}

\begin{table*}[t]
\centering
\footnotesize
\setlength{\tabcolsep}{3pt}
\begin{adjustbox}{max width=\textwidth}
\input{tables/app_ablation}
\end{adjustbox}
\caption{Ablations on Qwen3.5-9B on every corpus: authored error rate (Err; grey, bold and underlined as in Table~\ref{tab:ablation}), copy and vacuous share, and supported authored sentences per answer (Supp.).}
\label{tab:app-ablation}
\end{table*}

Each ablation of Section~\ref{sec:ablation} changes one part of \method{} on Qwen3.5-9B and keeps every other setting of Appendix~\ref{app:setup}; the answer-advantage variant sums the graded credit over the answer.
Table~\ref{tab:app-ablation} gives every corpus.
The vacuity test matters where vacuous sentences are common: without it, Err rises on ASQA, ELI5 and ExpertQA, the three corpora where the base model writes the most vacuous sentences, and falls slightly on RAGTruth and ClapNQ.
Removing the copy test or the graded credit, or using the answer-level advantage, raises Err or brings copying back on at least four of the five corpora.

\section{Human Evaluation}
\label{app:human}

Two graduate-level annotators experienced in the field label sentences and compare answers, blind to the method that wrote them.

\para{Sentence labels.}
The annotators label authored sentences as supported or unsupported by the passages.
On a judge-validity set drawn from base-model and trained answers on ASQA and ELI5, the judge agrees with people at least as often on the sentences of \method{} (90.8\%) as on the base model's (85.5\%), in both model families.
In a direct test with no judge, one random authored sentence from the base model and one from \method{} for each of 250 ELI5 questions answered by Llama-3.1-8B, 24.8\% of the sentences of \method{} are unsupported against 33.2\% of the base model's (exact McNemar \(p=0.01\)).
On 150 shared sentences, the annotators agree with a Cohen's kappa of 0.65.

\para{Answer preference.}
The annotators compare 150 pairs of answers, 75 of \method{} against the base model and 75 against GRPO + KL, drawn with a fixed seed across the four models and five corpora; the order of the two answers is random, and they follow the rubric of Appendix~\ref{app:quality}.
Counting a tie as half a win, they prefer \method{} to the base model in 56.7\% of the pairs and to GRPO + KL in 59.3\%.
They agree with a Cohen's kappa of 0.66, and on the pairs where both decide, their verdict matches Gemma's in 89.8\% and Mistral's in 88.9\%.

\section{Prompts}
\label{app:prompts}

Figure~\ref{fig:app-prompts} gives every prompt verbatim.
All models answer with the same prompt in one user turn, and thinking is disabled on the Qwen models.
The sentence judge's verdict is read as its probability of answering ``Yes''; the pairwise rubric was fixed before any trained answer was judged (Appendix~\ref{app:quality}).

\section{Example Answers}
\label{app:examples}

Examples~1--4 show one question for each model, two from training corpora and two from held-out ones, answered by all five policies of Table~\ref{tab:main}.
Each example is chosen to be representative of its setting.
Among the questions whose passages and five answers fit in one box, and which the base model answers without declining any part, we take the one whose answers best show each policy's typical behavior in that setting (Appendix~\ref{app:full}): \method{} copies no more than the base model and writes more grounded sentences, as it does at the median, and as many baselines as one question allows take their most frequent shortcut there.
Each box gives the question, every passage as the model saw it, and the five answers.
In the passages, \exsrc{highlighted text} is repeated verbatim, five words or more, by at least one answer, so the copied material is visible at a glance.
In the answers, every sentence is marked as \exC{copied}, \exS{authored and supported}, \exU{authored and unsupported} or \exV{vacuous} (supported or not under the judge; vacuous under the reward's checker), and \method{}'s answer is shaded.

In Example~1, the base model copies its first two sentences and adds an unsupported claim about the Qt toolkit; GRPO + KL keeps the two copied sentences and the sentence reward copies a third, and the copy penalty declines the question, as it does on 95.4\% of ELI5 questions for Qwen3.5-9B. \method{} conveys the same content in its own words, in four grounded sentences, with no error.
In Example~2, GRPO + KL and the sentence reward copy two and three of their three sentences, the copy penalty ends on a vacuous one, and \method{} copies one of its four sentences and grounds the other three.
In Example~3, the base model overstates the passage (California taxes no lottery winnings at all), GRPO + KL and the copy penalty decline a question the passage answers, the sentence reward opens with a copied sentence, and \method{} writes four grounded sentences, including the federal withholding rules.
In Example~4, GRPO + KL and the sentence reward shrink the answer to one copied sentence and the copy penalty to a vacuous one; \method{} writes three grounded sentences.

\clearpage
\begin{figure*}[p]
\begin{promptbox}[title={Answer prompt (all models, one user turn)}]{}
You are answering a question using only the passages provided below.\par\smallskip
Write a complete, self-contained answer in your own words, as connected prose. Use only information that is stated in the passages; do not add facts, examples, or explanations from anywhere else. If the passages do not contain the answer, say that they do not.\par\smallskip
Do not refer to the passages by number, do not write citation markers such as [1], and do not restate the question or add a preamble. Give the answer and nothing else.\par\medskip
\slot{passages}\par
\slot{question}
\end{promptbox}
\vspace{8pt}
\begin{promptbox}[title={Sentence judge behind Err (Gemma-4-31B)}]{}
Below are a source document and a single statement taken from a response written about it. Judge only whether the statement is supported by the source document: every fact it asserts must be stated in the document or follow directly from it.\par\smallskip
Answer No if the statement contradicts the document, or if it adds any detail the document does not contain---even a plausible one. Do not use outside knowledge, and do not judge whether the statement is well written or useful.\par\medskip
\#\#\# Source document\par \slot{passages}\par\smallskip
\#\#\# Statement\par \slot{sentence}\par\smallskip
Is the statement fully supported by the source document? Answer Yes or No.
\end{promptbox}
\vspace{8pt}
\begin{promptbox}[title={Pairwise judge (Gemma-4-31B and Mistral-Small-3.2-24B)}]{}
You are comparing two answers to the same question. Both were written from the passages below, which are the only information the writer had. Decide which answer serves a reader better.\par\smallskip
First check the content. Go through the claims of each answer and compare each one with the passages: is it stated in the passages or does it follow directly from them? A claim that adds a detail the passages do not contain, or that contradicts them, is an error, and errors weigh heavily. Then ask which answer gives the reader more of what the question asks for, correctly. Then, only to separate answers that are otherwise equal, prefer the clearer one.\par\smallskip
Judge what the claims say, not how they are worded. Many accurate answers restate the passages in new words; others quote them. Neither form is better: do not reward an answer for sounding close to the passages, and do not reward it for sounding original. Do not reward length by itself; a repeated or padded point adds nothing. If the passages do contain the answer, declining to answer is a failure; if they do not, saying so is correct. Ignore the order in which the answers appear. Choose ``tie'' if neither answer is better.\par\medskip
\#\#\# Question\par \slot{question}\par\smallskip
\#\#\# Passages\par \slot{passages}\par\smallskip
\#\#\# Answer A\par \slot{answer\_a}\par\smallskip
\#\#\# Answer B\par \slot{answer\_b}\par\medskip
Reply with one JSON object and nothing else: \{"reason": "<one sentence>", "verdict": "A" | "B" | "tie"\}
\end{promptbox}
\caption{The three prompts, verbatim: the answer prompt shared by all models (top), the sentence judge behind the authored error rate (middle), and the pairwise judge of Appendix~\ref{app:quality} (bottom).}
\label{fig:app-prompts}
\end{figure*}

\begin{figure*}[p]
\input{tables/app_example_1}
\end{figure*}

\begin{figure*}[p]
\input{tables/app_example_2}
\end{figure*}

\begin{figure*}[p]
\input{tables/app_example_3}
\end{figure*}

\begin{figure*}[p]
\input{tables/app_example_4}
\end{figure*}

%% file: tables/app_data.tex
\begin{tabular}{@{}llrrrr@{}}
\toprule
\textbf{Corpus} & \textbf{Role} & \textbf{Train} & \textbf{Eval} & \textbf{Eval (small)} & \textbf{Passages} \\
\midrule
ASQA & train + in domain & 2,850 & 948 & 300 & 5 \\
RAGTruth & train + in domain & 839 & 150 & 150 & 3 \\
ClapNQ & held out & -- & 300 & 300 & 1 \\
ELI5 & held out & -- & 1,000 & 300 & 5 \\
ExpertQA & held out & -- & 1,183 & 300 & 3 \\
\bottomrule
\end{tabular}

%% file: tables/app_hparams.tex
\begin{tabular}{@{}lr@{}}
\toprule
\textbf{Setting} & \textbf{Value} \\
\midrule
Answers per question $G$ & 8 \\
KL coefficient $\beta$ & 0.04 \\
Learning rate & 1e-6 \\
Warm-up steps & 36 \\
Epochs & 1 \\
Updates per batch & 1 \\
Loss & token-normalized \\
Max.\ answer tokens & 1024 \\
Sampling temperature / top-$p$ & 0.7 / 0.95 \\
Gradient clipping & 1.0 \\
Precision & bf16, full weight \\
Standardization $\epsilon$ & 1e-4 \\
\bottomrule
\end{tabular}

%% file: tables/app_finding1.tex
\begin{tabular}{@{}llrrrrrr@{}}
\toprule
\multirow{2}{*}{\textbf{Model}} & \multirow{2}{*}{\textbf{Corpus}} & \multicolumn{2}{c}{\textbf{Reward checker}} & \multicolumn{2}{c}{\textbf{MiniCheck-Flan-T5}} & \multicolumn{2}{c}{\textbf{AlignScore}} \\
\cmidrule(lr){3-4}\cmidrule(lr){5-6}\cmidrule(lr){7-8}
 &  & Auth. & Copied & Auth. & Copied & Auth. & Copied \\
\midrule
Qwen3.5-9B & ASQA & 15.3 & 4.2 & 17.4 & 6.1 & 12.2 & 6.3 \\
 & RAGTruth & 7.7 & 3.4 & 13.7 & 5.8 & 6.4 & 4.9 \\
 & ClapNQ & 3.6 & 1.1 & 10.9 & 5.0 & 3.1 & 0.7 \\
 & ELI5 & 23.9 & 3.0 & 29.2 & 8.6 & 10.7 & 8.5 \\
\midrule
Qwen3.5-4B & ASQA & 18.1 & 5.5 & 19.0 & 5.3 & 13.9 & 7.5 \\
 & RAGTruth & 8.3 & 4.5 & 14.0 & 6.6 & 7.4 & 5.0 \\
 & ClapNQ & 5.7 & 1.5 & 13.2 & 5.6 & 3.7 & 1.2 \\
 & ELI5 & 26.3 & 5.5 & 29.4 & 8.4 & 13.4 & 8.8 \\
\midrule
Llama-3.1-8B & ASQA & 18.3 & 5.8 & 19.3 & 7.4 & 16.5 & 8.0 \\
 & RAGTruth & 9.1 & 4.6 & 14.9 & 11.2 & 6.2 & 6.7 \\
 & ClapNQ & 6.0 & 2.3 & 16.7 & 7.7 & 5.6 & 0.5 \\
 & ELI5 & 24.3 & 6.0 & 29.5 & 9.0 & 22.5 & 9.1 \\
\midrule
Llama-3.2-3B & ASQA & 26.2 & 7.7 & 21.7 & 7.3 & 19.6 & 8.3 \\
 & RAGTruth & 14.7 & 4.6 & 19.5 & 8.0 & 10.7 & 4.1 \\
 & ClapNQ & 10.5 & 3.0 & 21.6 & 8.4 & 8.1 & 0.9 \\
 & ELI5 & 32.6 & 9.6 & 34.3 & 9.5 & 26.5 & 8.2 \\
\bottomrule
\end{tabular}

%% file: tables/app_paraphrase.tex
\begin{tabular}{@{}lrrrr@{}}
\toprule
\textbf{Tercile} & \textbf{ROUGE-L} & \textbf{Sentences} & \textbf{Reward checker} & \textbf{MiniCheck-Flan-T5} \\
\midrule
least similar & 0.27 & 7,362 & 74.8 & 73.3 \\
middle & 0.40 & 7,285 & 86.2 & 82.9 \\
closest & 0.57 & 7,323 & 91.4 & 89.2 \\
\bottomrule
\end{tabular}

%% file: tables/app_finding2.tex
\begin{tabular}{@{}lrrrrrrrr@{}}
\toprule
\textbf{Training} & \textbf{Reward} & \textbf{T5} & \textbf{Copy} & \textbf{Words} & \textbf{Vac.} & \textbf{Ref.} & \textbf{No auth.} & \textbf{EM} \\
\midrule
\multicolumn{9}{@{}l}{\textit{Qwen3.5-9B}} \\
Base model & 87.1 & 82.5 & 38.8 & 73.5 & 21.5 & 6.1 & 15.4 & 45.5 \\
GRPO & 98.9 & 96.1 & 72.8 & 13.7 & 18.1 & 0.0 & 81.0 & 27.6 \\
GRPO + KL & 92.3 & 87.7 & 39.2 & 60.0 & 28.5 & 14.7 & 19.5 & 41.8 \\
Copy penalty & 92.0 & 94.3 & 11.8 & 36.9 & 67.4 & 64.5 & 1.3 & 20.2 \\
DPO & 94.6 & 91.0 & 53.4 & 55.7 & 16.9 & 1.7 & 37.2 & 41.8 \\
DPO, copy-matched & 90.4 & 85.8 & 44.9 & 59.7 & 21.5 & 6.1 & 24.7 & 43.8 \\
\midrule
\multicolumn{9}{@{}l}{\textit{Llama-3.1-8B}} \\
Base model & 87.3 & 83.0 & 45.5 & 24.0 & 25.8 & 7.0 & 47.9 & 35.2 \\
GRPO & 99.8 & 99.8 & 4.6 & 7.6 & 58.9 & 0.0 & 5.2 & 0.5 \\
\bottomrule
\end{tabular}

%% file: tables/app_finding3.tex
\begin{tabular}{@{}lrrrrrr@{}}
\toprule
\multirow{2}{*}{\textbf{Sentence}} & \multirow{2}{*}{\textbf{Count}} & \multicolumn{3}{c}{\textbf{All sentences (\%)}} & \multicolumn{2}{c}{\textbf{Pushed up, if moved (\%)}} \\
\cmidrule(lr){3-5}\cmidrule(lr){6-7}
 &  & Up & Down & Unmoved & Pooled & Range \\
\midrule
Copied & 38,106 & 36.3 & 14.1 & 49.6 & 72.0 & 66.8--83.0 \\
Vacuous & 21,344 & 40.9 & 28.7 & 30.5 & 58.8 & 54.7--75.6 \\
Grounded & 72,765 & 50.1 & 16.3 & 33.6 & 75.5 & 70.1--93.3 \\
Authored, unsupported & 18,011 & 19.7 & 75.7 & 4.6 & 20.7 & 3.0--27.8 \\
\bottomrule
\end{tabular}

%% file: tables/app_full.tex
\begin{tabular}{@{}lrrrr rrrr rrrr rrrr rrrr@{}}
\toprule
\multirow{2}{*}{\textbf{Method}} & \multicolumn{4}{c}{\textbf{ASQA}} & \multicolumn{4}{c}{\textbf{RAGTruth}} & \multicolumn{4}{c}{\textbf{ClapNQ}} & \multicolumn{4}{c}{\textbf{ELI5}} & \multicolumn{4}{c}{\textbf{ExpertQA}} \\
\cmidrule(lr){2-5}\cmidrule(lr){6-9}\cmidrule(lr){10-13}\cmidrule(lr){14-17}\cmidrule(lr){18-21}
 & Err & T5 & AS & Supp. & Err & T5 & AS & Supp. & Err & T5 & AS & Supp. & Err & T5 & AS & Supp. & Err & T5 & AS & Supp. \\
\midrule
\multicolumn{21}{@{}l}{\textit{Qwen3.5-9B}} \\
Base model & 5.1 & 82.3 & 89.9 & 1.6 & 2.3 & 87.9 & 94.2 & 3.0 & 3.6 & 88.4 & 96.7 & 1.8 & 6.0 & 70.9 & 89.6 & 2.2 & 6.2 & 71.5 & 89.7 & 2.3 \\
GRPO + KL & \shortcut{4.0} & 87.7 & 91.2 & 1.4 & \shortcut{1.2} & 88.8 & 93.0 & 3.0 & \underline{3.2} & 90.6 & 97.3 & 1.8 & \shortcut{5.4} & 79.5 & 91.1 & 2.0 & \shortcut{5.9} & 81.6 & 92.9 & 2.1 \\
Copy penalty & \shortcut{1.5} & 94.3 & 95.1 & 1.4 & \shortcut{1.2} & 73.5 & 92.8 & 1.2 & \shortcut{0.4} & 87.6 & 98.0 & 1.5 & \shortcut{1.1} & 94.0 & 98.8 & 1.1 & \shortcut{0.4} & 74.5 & 98.6 & 0.9 \\
Sentence reward & \shortcut{3.7} & 87.1 & -- & 1.5 & \shortcut{0.9} & 90.9 & -- & 2.9 & \shortcut{3.1} & 90.7 & -- & 1.7 & \shortcut{3.5} & 80.0 & -- & 2.4 & \shortcut{4.3} & 80.4 & 90.0 & 2.5 \\
\rowcolor{oursrow}\shadeL \method{} & \textbf{4.0} & 87.9 & 90.7 & 2.3 & \textbf{1.1} & 90.9 & 94.6 & 3.8 & \textbf{2.2} & 90.8 & 98.3 & 2.5 & \textbf{3.4} & 78.8 & 89.4 & 3.2 & \textbf{3.7} & 84.9 & 90.7 & 3.4\shadeR \\
\midrule
\multicolumn{21}{@{}l}{\textit{Qwen3.5-4B}} \\
Base model & 8.6 & 80.0 & 87.1 & 1.7 & 2.9 & 87.9 & 92.2 & 2.9 & 5.8 & 89.6 & 96.1 & 1.8 & 8.8 & 72.7 & 88.1 & 2.3 & 8.7 & 64.7 & 85.1 & 2.1 \\
GRPO + KL & \shortcut{3.2} & 91.8 & 94.3 & 1.4 & \shortcut{2.0} & 86.3 & 93.9 & 2.3 & \shortcut{2.9} & 88.6 & 98.2 & 1.5 & \shortcut{3.9} & 90.5 & 95.0 & 1.5 & \shortcut{3.5} & 80.8 & 95.9 & 1.4 \\
Copy penalty & \shortcut{8.2} & 90.9 & -- & 1.7 & \shortcut{3.1} & 86.9 & -- & 3.2 & \shortcut{4.9} & 90.0 & -- & 2.3 & \shortcut{6.7} & 79.6 & -- & 2.2 & \shortcut{9.1} & 75.0 & 91.1 & 2.1 \\
Sentence reward & \shortcut{4.3} & 85.3 & -- & 1.7 & \textbf{2.2} & 89.7 & -- & 3.0 & \shortcut{4.2} & 91.5 & -- & 1.8 & \shortcut{5.9} & 76.8 & -- & 2.8 & \shortcut{5.9} & 76.3 & 87.1 & 2.8 \\
\rowcolor{oursrow}\shadeL \method{} & \textbf{6.5} & 87.3 & 90.5 & 2.4 & \underline{2.3} & 89.9 & 94.3 & 3.5 & \textbf{4.0} & 91.5 & 97.5 & 2.6 & \textbf{6.6} & 80.9 & 89.4 & 3.2 & \textbf{7.7} & 81.7 & 88.7 & 3.2\shadeR \\
\midrule
\multicolumn{21}{@{}l}{\textit{Llama-3.1-8B}} \\
Base model & 11.7 & 82.6 & 86.3 & 0.5 & 8.5 & 85.7 & 91.9 & 1.6 & 14.0 & 81.4 & 93.0 & 0.7 & 27.9 & 69.1 & 76.6 & 1.4 & 27.3 & 72.1 & 79.6 & 1.2 \\
GRPO + KL & \shortcut{6.6} & 90.2 & 92.1 & 0.4 & \shortcut{6.7} & 90.0 & 94.4 & 1.6 & \textbf{5.6} & 88.5 & 97.0 & 0.8 & \shortcut{21.1} & 77.1 & 82.3 & 1.3 & \underline{22.5} & 78.4 & 84.1 & 1.4 \\
Copy penalty & \shortcut{8.0} & 91.6 & 90.9 & 0.8 & \shortcut{6.0} & 87.1 & 92.8 & 2.0 & \shortcut{8.8} & 88.3 & 94.4 & 1.1 & \shortcut{17.6} & 80.2 & 84.0 & 1.5 & \shortcut{20.5} & 78.0 & 85.1 & 1.5 \\
Sentence reward & \shortcut{4.8} & 90.5 & -- & 0.4 & \shortcut{5.1} & 88.6 & -- & 1.4 & \shortcut{5.5} & 88.9 & -- & 0.8 & \shortcut{18.2} & 78.8 & -- & 1.4 & \shortcut{20.0} & 78.7 & 85.9 & 1.3 \\
\rowcolor{oursrow}\shadeL \method{} & \textbf{6.0} & 86.6 & 91.4 & 0.8 & \textbf{3.9} & 88.2 & 95.8 & 2.2 & \underline{7.2} & 90.4 & 97.9 & 1.3 & \textbf{17.4} & 80.8 & 84.8 & 1.8 & \textbf{19.9} & 81.5 & 86.8 & 1.8\shadeR \\
\midrule
\multicolumn{21}{@{}l}{\textit{Llama-3.2-3B}} \\
Base model & 18.9 & 84.9 & 81.8 & 0.5 & 11.7 & 83.8 & 89.2 & 1.2 & 14.5 & 75.4 & 91.7 & 0.6 & 38.7 & 63.3 & 72.7 & 1.4 & 40.5 & 63.6 & 72.6 & 1.0 \\
GRPO + KL & \shortcut{14.4} & 89.0 & 91.5 & 0.4 & \shortcut{5.7} & 89.8 & 90.3 & 1.1 & \shortcut{10.0} & 88.1 & 97.5 & 0.6 & \shortcut{25.4} & 77.2 & 83.2 & 1.4 & \shortcut{28.0} & 73.8 & 82.4 & 1.1 \\
Copy penalty & \shortcut{10.7} & 92.0 & -- & 0.8 & \underline{9.0} & 91.4 & -- & 1.7 & \shortcut{9.9} & 85.8 & -- & 1.0 & \shortcut{26.8} & 73.8 & -- & 1.3 & \shortcut{27.1} & 75.3 & 82.8 & 1.4 \\
Sentence reward & \shortcut{11.8} & 92.7 & -- & 0.3 & \shortcut{7.6} & 89.1 & -- & 1.1 & \shortcut{6.3} & 90.5 & -- & 0.7 & \shortcut{23.8} & 76.9 & -- & 1.4 & \shortcut{24.0} & 76.9 & 83.7 & 1.1 \\
\rowcolor{oursrow}\shadeL \method{} & \textbf{15.0} & 86.2 & 85.4 & 0.8 & \textbf{7.1} & 88.2 & 93.5 & 2.0 & \textbf{8.7} & 84.4 & 96.3 & 1.2 & \textbf{24.9} & 75.5 & 83.2 & 2.1 & \textbf{25.0} & 77.5 & 84.9 & 1.8\shadeR \\
\bottomrule
\end{tabular}

%% file: tables/app_shortcuts.tex
\begin{tabular}{@{}lrrrr rrrr rrrr rrrr rrrr@{}}
\toprule
\multirow{2}{*}{\textbf{Method}} & \multicolumn{4}{c}{\textbf{ASQA}} & \multicolumn{4}{c}{\textbf{RAGTruth}} & \multicolumn{4}{c}{\textbf{ClapNQ}} & \multicolumn{4}{c}{\textbf{ELI5}} & \multicolumn{4}{c}{\textbf{ExpertQA}} \\
\cmidrule(lr){2-5}\cmidrule(lr){6-9}\cmidrule(lr){10-13}\cmidrule(lr){14-17}\cmidrule(lr){18-21}
 & Copy & Words & Vac. & Ref. & Copy & Words & Vac. & Ref. & Copy & Words & Vac. & Ref. & Copy & Words & Vac. & Ref. & Copy & Words & Vac. & Ref. \\
\midrule
\multicolumn{21}{@{}l}{\textit{Qwen3.5-9B}} \\
Base model & 38.9 & 73.3 & 22.5 & 6.1 & 30.1 & 119.3 & 16.6 & 2.7 & 38.6 & 82.8 & 21.2 & 4.3 & 13.5 & 102.1 & 38.6 & 4.3 & 19.2 & 109.6 & 40.5 & 7.3 \\
GRPO + KL & 39.3 & 60.0 & 28.4 & 14.7 & 29.7 & 113.6 & 18.6 & 7.3 & 39.1 & 81.3 & 21.7 & 5.0 & 11.7 & 77.0 & 43.4 & 45.9 & 16.2 & 83.1 & 49.0 & 43.7 \\
Copy penalty & 11.8 & 36.9 & 67.4 & 64.5 & 5.0 & 34.3 & 81.6 & 78.7 & 11.4 & 41.3 & 72.7 & 66.0 & 0.9 & 16.3 & 95.2 & 95.4 & 1.8 & 18.7 & 94.8 & 93.7 \\
Sentence reward & 42.3 & 66.4 & 22.8 & 7.3 & 34.0 & 116.7 & 15.9 & 2.7 & 42.2 & 80.6 & 19.0 & 2.0 & 14.5 & 92.2 & 35.9 & 28.2 & 21.3 & 104.5 & 34.8 & 18.8 \\
\rowcolor{oursrow}\shadeL \method{} & 28.6 & 80.9 & 10.9 & 0.2 & 22.7 & 122.4 & 12.6 & 0.3 & 28.4 & 89.7 & 13.7 & 0.2 & 12.9 & 123.5 & 14.1 & 0.3 & 18.3 & 127.8 & 15.2 & 0.1\shadeR \\
\midrule
\multicolumn{21}{@{}l}{\textit{Qwen3.5-4B}} \\
Base model & 38.2 & 84.0 & 23.9 & 5.7 & 32.7 & 118.9 & 19.2 & 3.3 & 40.3 & 86.0 & 20.2 & 3.0 & 15.9 & 109.0 & 37.2 & 7.0 & 22.2 & 121.0 & 31.1 & 7.7 \\
GRPO + KL & 32.6 & 49.7 & 43.7 & 27.7 & 28.4 & 93.8 & 34.5 & 22.7 & 35.8 & 68.0 & 39.9 & 22.7 & 6.9 & 38.8 & 75.9 & 75.7 & 9.8 & 46.8 & 74.9 & 70.7 \\
Copy penalty & 24.4 & 58.0 & 37.9 & 28.0 & 23.2 & 115.5 & 24.0 & 10.7 & 27.7 & 82.5 & 26.4 & 12.3 & 9.8 & 83.8 & 46.4 & 48.7 & 13.7 & 92.0 & 46.8 & 41.0 \\
Sentence reward & 43.2 & 83.9 & 17.8 & 2.0 & 34.0 & 119.2 & 18.2 & 1.3 & 43.7 & 87.6 & 19.9 & 1.3 & 17.9 & 118.1 & 28.7 & 13.7 & 25.5 & 131.5 & 25.4 & 3.7 \\
\rowcolor{oursrow}\shadeL \method{} & 28.4 & 86.3 & 12.1 & 0.4 & 25.0 & 119.9 & 12.5 & 0.4 & 29.4 & 91.9 & 12.0 & 0.4 & 15.3 & 124.7 & 13.5 & 2.4 & 21.9 & 131.0 & 14.4 & 1.0\shadeR \\
\midrule
\multicolumn{21}{@{}l}{\textit{Llama-3.1-8B}} \\
Base model & 45.4 & 24.1 & 28.5 & 6.9 & 43.1 & 69.5 & 17.8 & 0.7 & 43.4 & 36.5 & 21.0 & 0.3 & 25.3 & 60.2 & 26.5 & 7.8 & 32.7 & 61.8 & 27.5 & 7.6 \\
GRPO + KL & 50.1 & 21.3 & 29.5 & 3.4 & 44.0 & 69.0 & 23.0 & 2.7 & 45.2 & 38.6 & 20.5 & 0.3 & 31.3 & 55.1 & 26.2 & 7.7 & 35.7 & 63.0 & 28.3 & 6.1 \\
Copy penalty & 14.2 & 13.7 & 34.2 & 10.4 & 22.8 & 61.5 & 21.9 & 4.0 & 17.6 & 29.9 & 22.3 & 1.0 & 9.3 & 44.4 & 36.5 & 12.1 & 13.8 & 49.5 & 33.2 & 7.3 \\
Sentence reward & 51.7 & 22.4 & 27.9 & 3.2 & 47.0 & 68.7 & 18.7 & 2.0 & 47.1 & 40.0 & 19.0 & 0.0 & 29.9 & 56.5 & 24.3 & 9.2 & 37.3 & 61.1 & 25.3 & 5.7 \\
\rowcolor{oursrow}\shadeL \method{} & 30.2 & 27.7 & 9.9 & 1.3 & 26.2 & 74.3 & 16.1 & 0.0 & 26.2 & 44.8 & 14.1 & 0.0 & 17.9 & 61.5 & 13.2 & 1.1 & 23.4 & 67.3 & 15.3 & 1.0\shadeR \\
\midrule
\multicolumn{21}{@{}l}{\textit{Llama-3.2-3B}} \\
Base model & 49.3 & 22.4 & 21.3 & 7.0 & 45.7 & 68.1 & 15.3 & 2.0 & 45.8 & 40.4 & 26.0 & 3.7 & 27.0 & 71.8 & 15.1 & 2.7 & 33.6 & 60.2 & 22.5 & 6.0 \\
GRPO + KL & 57.9 & 22.5 & 25.8 & 4.7 & 55.4 & 63.9 & 17.4 & 1.3 & 54.6 & 41.8 & 22.5 & 2.0 & 34.8 & 62.1 & 16.5 & 2.7 & 44.3 & 64.6 & 20.2 & 5.3 \\
Copy penalty & 14.7 & 12.6 & 34.5 & 10.0 & 29.4 & 60.2 & 17.8 & 2.0 & 22.0 & 26.8 & 28.9 & 3.0 & 12.4 & 42.2 & 28.1 & 12.0 & 15.9 & 47.3 & 32.2 & 10.7 \\
Sentence reward & 59.5 & 22.0 & 18.5 & 2.0 & 54.7 & 64.8 & 12.8 & 1.3 & 54.3 & 46.1 & 18.4 & 1.0 & 35.5 & 64.7 & 19.4 & 1.3 & 44.8 & 64.0 & 17.7 & 4.3 \\
\rowcolor{oursrow}\shadeL \method{} & 31.0 & 26.6 & 9.8 & 3.7 & 26.0 & 66.5 & 13.4 & 1.3 & 28.9 & 45.8 & 17.5 & 0.7 & 15.4 & 73.5 & 14.0 & 2.3 & 21.5 & 67.0 & 15.9 & 4.3\shadeR \\
\bottomrule
\end{tabular}

%% file: tables/app_answer_counts.tex
\begin{tabular}{@{}l rr rr rr rr@{}}
\toprule
\multirow{2}{*}{\textbf{Corpus}} & \multicolumn{2}{c}{\textbf{Err}} & \multicolumn{2}{c}{\textbf{Unsupp.\,(J)}} & \multicolumn{2}{c}{\textbf{Supp.\,(J)}} & \multicolumn{2}{c}{\textbf{Whole}} \\
\cmidrule(lr){2-3}\cmidrule(lr){4-5}\cmidrule(lr){6-7}\cmidrule(lr){8-9}
 & Base & \method{} & Base & \method{} & Base & \method{} & Base & \method{} \\
\midrule
\multicolumn{9}{@{}l}{\textit{Qwen3.5-9B}} \\
ASQA & 5.1 & 3.9$^{*}$ & 0.10 & 0.10\phantom{$^{*}$} & 1.8 & 2.5 & 13.4 & 10.5$^{*}$ \\
RAGT. & 2.3 & 1.1\phantom{$^{*}$} & 0.08 & 0.05\phantom{$^{*}$} & 3.4 & 4.1 & 10.2 & 8.5\phantom{$^{*}$} \\
ClapNQ & 3.6 & 2.2\phantom{$^{*}$} & 0.07 & 0.06\phantom{$^{*}$} & 2.0 & 2.7 & 9.5 & 8.8\phantom{$^{*}$} \\
ELI5 & 6.0 & 3.4$^{*}$ & 0.19 & 0.14$^{*}$ & 2.9 & 3.9 & 27.4 & 20.6$^{*}$ \\
Expert & 6.2 & 3.7$^{*}$ & 0.20 & 0.15$^{*}$ & 2.9 & 3.9 & 24.6 & 13.9$^{*}$ \\
\midrule
\multicolumn{9}{@{}l}{\textit{Qwen3.5-4B}} \\
ASQA & 8.6 & 6.5\phantom{$^{*}$} & 0.18 & 0.17\phantom{$^{*}$} & 1.9 & 2.5 & 14.6 & 10.9$^{*}$ \\
RAGT. & 2.9 & 2.3\phantom{$^{*}$} & 0.09 & 0.09\phantom{$^{*}$} & 3.2 & 3.8 & 10.4 & 9.4\phantom{$^{*}$} \\
ClapNQ & 5.8 & 4.0$^{*}$ & 0.12 & 0.11\phantom{$^{*}$} & 1.9 & 2.7 & 9.1 & 7.9\phantom{$^{*}$} \\
ELI5 & 8.8 & 6.6$^{*}$ & 0.28 & 0.26\phantom{$^{*}$} & 2.9 & 3.7 & 24.6 & 18.1$^{*}$ \\
Expert & 8.7 & 7.7\phantom{$^{*}$} & 0.28 & 0.30\phantom{$^{*}$} & 3.0 & 3.6 & 30.0 & 16.2$^{*}$ \\
\midrule
\multicolumn{9}{@{}l}{\textit{Llama-3.1-8B}} \\
ASQA & 11.7 & 6.0$^{*}$ & 0.07 & 0.05\phantom{$^{*}$} & 0.5 & 0.8 & 11.9 & 11.5\phantom{$^{*}$} \\
RAGT. & 8.5 & 3.9$^{*}$ & 0.15 & 0.10\phantom{$^{*}$} & 1.7 & 2.4 & 11.1 & 11.6\phantom{$^{*}$} \\
ClapNQ & 14.0 & 7.2$^{*}$ & 0.12 & 0.10\phantom{$^{*}$} & 0.7 & 1.3 & 13.3 & 9.1$^{*}$ \\
ELI5 & 27.9 & 17.4$^{*}$ & 0.55 & 0.38$^{*}$ & 1.4 & 1.8 & 26.1 & 17.9$^{*}$ \\
Expert & 27.3 & 19.9$^{*}$ & 0.46 & 0.43\phantom{$^{*}$} & 1.2 & 1.7 & 21.5 & 16.2$^{*}$ \\
\midrule
\multicolumn{9}{@{}l}{\textit{Llama-3.2-3B}} \\
ASQA & 18.9 & 15.0\phantom{$^{*}$} & 0.10 & 0.13\phantom{$^{*}$} & 0.4 & 0.7 & 10.2 & 11.8\phantom{$^{*}$} \\
RAGT. & 11.7 & 7.1$^{*}$ & 0.17 & 0.16\phantom{$^{*}$} & 1.3 & 2.1 & 12.2 & 11.4\phantom{$^{*}$} \\
ClapNQ & 14.5 & 8.7$^{*}$ & 0.11 & 0.13\phantom{$^{*}$} & 0.7 & 1.3 & 15.9 & 13.8\phantom{$^{*}$} \\
ELI5 & 38.7 & 24.9$^{*}$ & 0.84 & 0.69$^{*}$ & 1.3 & 2.1 & 31.6 & 23.6$^{*}$ \\
Expert & 40.5 & 25.0$^{*}$ & 0.66 & 0.57\phantom{$^{*}$} & 1.0 & 1.7 & 28.0 & 20.4$^{*}$ \\
\bottomrule
\end{tabular}

%% file: tables/app_seeds.tex
\begin{tabular}{@{}llrrrrr@{}}
\toprule
\textbf{Model} & \textbf{Run} & \textbf{ASQA} & \textbf{RAGT.} & \textbf{ClapNQ} & \textbf{ELI5} & \textbf{Expert} \\
\midrule
Qwen3.5-9B & Base model & 5.1 & 2.3 & 3.6 & 6.0 & 6.2 \\
 & Seed 1 & 4.5 & 1.5 & 2.0 & 3.3 & 4.1 \\
 & Seed 2 & 3.4 & 0.8 & 2.3 & 3.5 & 3.4 \\
\midrule
Qwen3.5-4B & Base model & 8.6 & 2.9 & 5.8 & 8.8 & 8.7 \\
 & Seed 1 & 6.6 & 2.0 & 3.5 & 6.3 & 6.5 \\
 & Seed 2 & 5.7 & 2.6 & 4.0 & 6.3 & 8.6 \\
 & Seed 3 & 7.1 & 2.1 & 4.5 & 7.1 & 7.9 \\
\midrule
Llama-3.1-8B & Base model & 11.7 & 8.5 & 14.0 & 27.9 & 27.3 \\
 & Seed 1 & 6.0 & 3.9 & 7.2 & 17.4 & 19.9 \\
\midrule
Llama-3.2-3B & Base model & 18.9 & 11.7 & 14.5 & 38.7 & 40.5 \\
 & Seed 1 & 15.0 & 7.1 & 8.7 & 24.9 & 25.0 \\
\bottomrule
\end{tabular}

%% file: tables/app_sensitivity.tex
\begin{tabular}{@{}rccccc@{}}
\toprule
\multirow{2}{*}{\textbf{Margin}} & \multirow{2}{*}{\textbf{\method{} flagged}} & \multicolumn{3}{c}{\textbf{\method{} lowest Err}} & \multirow{2}{*}{\textbf{\begin{tabular}[c]{@{}c@{}}Baseline\\cells flagged\end{tabular}}} \\
\cmidrule(lr){3-5}
 &  & 75\% & 85\% & 95\% &  \\
\midrule
1 & 0 & 20 & 20 & 20 & 59 \\
2 & 0 & 18 & 18 & 18 & 57 \\
3 & 0 & 18 & 18 & 18 & 55 \\
4 & 0 & 15 & 15 & 16 & 48 \\
5 & 0 & 15 & 15 & 16 & 43 \\
7.5 & 0 & 12 & 12 & 14 & 37 \\
10 & 0 & 8 & 8 & 12 & 29 \\
\bottomrule
\end{tabular}

%% file: tables/app_quality.tex
\begin{tabular}{@{}llrrrrrrrrrr@{}}
\toprule
\multirow{2}{*}{\textbf{Model}} & \multirow{2}{*}{\textbf{Corpus}} & \multicolumn{4}{c}{\textbf{Win rate of \method{} against}} & \multicolumn{4}{c}{\textbf{Controlled for length}} & \multicolumn{2}{c}{\textbf{Coverage}} \\
\cmidrule(lr){3-6}\cmidrule(lr){7-10}\cmidrule(lr){11-12}
 &  & Base & GRPO + KL & Copy pen. & Sent. rew. & Base & GRPO + KL & Copy pen. & Sent. rew. & Base & \method{} \\
\midrule
Qwen3.5-9B & ASQA & \textit{45.5} & 52.0 & \textbf{72.2} & 50.7 & \textit{43.9} & \textit{43.9} & \textbf{57.4} & \textit{45.2} & 50.9 & 52.0 \\
 & RAGTruth & 52.3 & 55.0 & \textbf{88.3} & 52.7 & 52.5 & 52.6 & 61.1 & 50.9 & -- & -- \\
 & ClapNQ & \textbf{54.2} & 53.0 & \textbf{78.3} & 52.7 & 52.7 & 51.4 & \textbf{59.1} & 50.7 & 89.5 & \textbf{93.5} \\
 & ELI5 & 51.2 & 51.7 & \textbf{76.5} & 53.2 & 47.4 & \textit{43.6} & 55.9 & 47.4 & 14.4 & 14.9 \\
 & ExpertQA & 50.5 & \textbf{59.5} & \textbf{83.8} & \textbf{54.8} & \textit{45.0} & 48.8 & 51.7 & 48.8 & -- & -- \\
\midrule
Qwen3.5-4B & ASQA & 49.5 & \textbf{61.8} & \textbf{59.7} & 48.8 & 49.4 & 50.9 & 54.2 & 48.5 & 50.9 & 51.2 \\
 & RAGTruth & 54.3 & \textbf{65.0} & 52.0 & 53.7 & 52.3 & 55.0 & 50.4 & 52.8 & -- & -- \\
 & ClapNQ & 51.0 & \textbf{62.7} & 52.8 & 52.0 & 49.2 & 51.3 & 50.2 & 51.1 & 91.0 & \textbf{92.6} \\
 & ELI5 & \textit{41.3} & \textbf{62.7} & \textit{44.0} & \textit{42.7} & \textit{39.4} & 53.6 & \textit{42.5} & \textit{42.1} & 14.1 & 14.9 \\
 & ExpertQA & 46.7 & \textbf{72.2} & \textbf{57.2} & 49.3 & \textit{45.7} & 53.0 & 51.1 & 50.3 & -- & -- \\
\midrule
Llama-3.1-8B & ASQA & 53.0 & \textbf{58.0} & \textbf{69.4} & \textbf{57.6} & 47.7 & 50.8 & 51.8 & 53.2 & 37.9 & 38.3 \\
 & RAGTruth & 48.7 & 49.0 & \textbf{61.4} & 48.3 & 43.7 & 43.2 & 50.5 & \textit{41.8} & -- & -- \\
 & ClapNQ & \textbf{58.0} & \textbf{57.3} & \textbf{72.8} & 52.3 & 47.5 & 50.6 & 54.8 & 46.7 & 63.6 & \textbf{71.2} \\
 & ELI5 & 49.2 & 49.8 & 51.2 & \textit{44.7} & 49.2 & 48.4 & 48.3 & \textit{43.2} & 13.1 & 13.1 \\
 & ExpertQA & 48.2 & 49.5 & \textbf{60.7} & 51.5 & 45.4 & 47.7 & \textbf{56.8} & 48.1 & -- & -- \\
\midrule
Llama-3.2-3B & ASQA & 52.0 & 53.3 & \textbf{67.9} & \textbf{57.5} & 50.8 & 51.0 & 53.6 & \textbf{55.1} & 36.6 & 37.6 \\
 & RAGTruth & 50.7 & 51.0 & 55.7 & 54.7 & 52.4 & 48.3 & 48.4 & 54.6 & -- & -- \\
 & ClapNQ & \textbf{55.2} & \textbf{56.2} & \textbf{73.8} & 49.5 & 51.4 & 51.9 & 56.3 & 48.5 & 65.2 & \textbf{70.6} \\
 & ELI5 & \textbf{54.3} & 53.2 & 54.3 & 53.2 & 54.3 & 52.1 & 50.1 & 52.0 & 12.7 & 12.0 \\
 & ExpertQA & \textbf{54.8} & 53.2 & \textbf{59.5} & 52.3 & 52.6 & 52.2 & \textbf{56.6} & 51.0 & -- & -- \\
\bottomrule
\end{tabular}

%% file: tables/app_answerable_asc.tex
\begin{tabular}{@{}l r rr rr@{}}
\toprule
\multirow{2}{*}{\textbf{Corpus}} & \multirow{2}{*}{\textbf{Questions}} & \multicolumn{2}{c}{\textbf{Err}} & \multicolumn{2}{c}{\textbf{Supp.}} \\
\cmidrule(lr){3-4}\cmidrule(lr){5-6}
 &  & Base & \method{} & Base & \method{} \\
\midrule
\multicolumn{6}{@{}l}{\textit{Qwen3.5-9B}} \\
ASQA & 663 & 3.9 & 3.3\phantom{$^{*}$} & 1.54 & 2.22$^{*}$ \\
RAGT. & 91 & 1.2 & 1.3\phantom{$^{*}$} & 3.16 & 3.96$^{*}$ \\
ClapNQ & 235 & 3.1 & 2.4\phantom{$^{*}$} & 1.83 & 2.57$^{*}$ \\
ELI5 & 248 & 3.7 & 2.5$^{*}$ & 3.08 & 3.53$^{*}$ \\
Expert & 363 & 5.2 & 3.5$^{*}$ & 3.12 & 3.82$^{*}$ \\
\midrule
\multicolumn{6}{@{}l}{\textit{Qwen3.5-4B}} \\
ASQA & 214 & 8.5 & 5.2$^{*}$ & 1.67 & 2.37$^{*}$ \\
RAGT. & 91 & 2.6 & 1.6\phantom{$^{*}$} & 3.07 & 3.75$^{*}$ \\
ClapNQ & 235 & 5.2 & 3.6\phantom{$^{*}$} & 1.87 & 2.65$^{*}$ \\
ELI5 & 70 & 7.3 & 4.3$^{*}$ & 2.74 & 3.38$^{*}$ \\
Expert & 86 & 6.3 & 4.6\phantom{$^{*}$} & 3.05 & 3.66$^{*}$ \\
\midrule
\multicolumn{6}{@{}l}{\textit{Llama-3.1-8B}} \\
ASQA & 663 & 7.1 & 3.1$^{*}$ & 0.46 & 0.78$^{*}$ \\
RAGT. & 91 & 3.9 & 1.2\phantom{$^{*}$} & 1.71 & 2.42$^{*}$ \\
ClapNQ & 235 & 9.7 & 5.2$^{*}$ & 0.75 & 1.35$^{*}$ \\
ELI5 & 248 & 10.6 & 5.8$^{*}$ & 1.56 & 1.94$^{*}$ \\
Expert & 363 & 10.3 & 6.9$^{*}$ & 1.25 & 1.78$^{*}$ \\
\midrule
\multicolumn{6}{@{}l}{\textit{Llama-3.2-3B}} \\
ASQA & 214 & 16.0 & 10.4\phantom{$^{*}$} & 0.43 & 0.75$^{*}$ \\
RAGT. & 91 & 8.4 & 4.2\phantom{$^{*}$} & 1.13 & 2.11$^{*}$ \\
ClapNQ & 235 & 10.2 & 6.9\phantom{$^{*}$} & 0.56 & 1.26$^{*}$ \\
ELI5 & 70 & 17.2 & 9.8\phantom{$^{*}$} & 1.41 & 1.93$^{*}$ \\
Expert & 86 & 15.5 & 11.8\phantom{$^{*}$} & 1.07 & 1.73$^{*}$ \\
\bottomrule
\end{tabular}

%% file: tables/app_ablation.tex
\begin{tabular}{@{}lrrrr rrrr rrrr rrrr rrrr@{}}
\toprule
\multirow{2}{*}{\textbf{Method}} & \multicolumn{4}{c}{\textbf{ASQA}} & \multicolumn{4}{c}{\textbf{RAGTruth}} & \multicolumn{4}{c}{\textbf{ClapNQ}} & \multicolumn{4}{c}{\textbf{ELI5}} & \multicolumn{4}{c}{\textbf{ExpertQA}} \\
\cmidrule(lr){2-5}\cmidrule(lr){6-9}\cmidrule(lr){10-13}\cmidrule(lr){14-17}\cmidrule(lr){18-21}
 & Err & Copy & Vac. & Supp. & Err & Copy & Vac. & Supp. & Err & Copy & Vac. & Supp. & Err & Copy & Vac. & Supp. & Err & Copy & Vac. & Supp. \\
\midrule
\multicolumn{21}{@{}l}{\textit{Qwen3.5-9B}} \\
\rowcolor{oursrow}\shadeL \method{} & \textbf{4.0} & 28.6 & 10.9 & 2.3 & \underline{1.1} & 22.7 & 12.6 & 3.8 & 2.2 & 28.4 & 13.7 & 2.5 & \textbf{3.4} & 12.9 & 14.1 & 3.2 & \textbf{3.7} & 18.3 & 15.2 & 3.4\shadeR \\
$-$ copy test & \shortcut{2.4} & 46.3 & 11.1 & 1.4 & 1.5 & 31.8 & 10.0 & 3.4 & \shortcut{1.6} & 42.2 & 11.5 & 1.9 & \shortcut{3.5} & 19.1 & 10.4 & 3.1 & \shortcut{3.5} & 26.9 & 12.7 & 3.2 \\
$-$ vacuity test & 5.2 & 26.9 & 17.2 & 2.0 & \textbf{1.0} & 24.0 & 15.9 & 3.7 & \textbf{1.6} & 26.7 & 17.8 & 2.4 & 4.6 & 11.3 & 23.4 & 3.0 & 7.5 & 17.8 & 21.4 & 3.3 \\
Sentence reward & \shortcut{3.7} & 42.3 & 22.8 & 1.5 & \shortcut{0.9} & 34.0 & 15.9 & 2.9 & \shortcut{3.1} & 42.2 & 19.0 & 1.7 & \shortcut{3.5} & 14.5 & 35.9 & 2.4 & \shortcut{4.3} & 21.3 & 34.8 & 2.5 \\
Answer advantage & 5.4 & 33.1 & 15.2 & 2.8 & 2.3 & 26.3 & 12.5 & 4.4 & \underline{1.9} & 33.4 & 15.2 & 2.9 & 5.1 & 13.0 & 24.0 & 3.2 & 5.1 & 19.9 & 24.6 & 3.3 \\
Binary credit & \underline{4.6} & 26.6 & 10.1 & 2.3 & 1.7 & 21.7 & 12.0 & 3.8 & 2.5 & 26.6 & 12.9 & 2.5 & \underline{4.1} & 12.3 & 13.7 & 2.9 & \underline{4.4} & 17.4 & 15.2 & 3.3 \\
\bottomrule
\end{tabular}

%% file: tables/app_example_1.tex
\begin{exbox}{Example 1: Qwen3.5-9B on ELI5 (held out)\\\normalfont Question: What is Linux, what does it do,  \&  why is it so popular?}{0.43}
\exlabel{Passages}
\textbf{[1]}~\textit{Top 50 Most Asked Linux Interview Questions \& Answers in 2019.} Q-25: What is “PR” Command in Linux? Q-26: What is “inode” in Linux? Q-27: What is the use of “man” command in Linux? Q-28: What Does “pipe” Do in Linux? Q-29: What Does “mkdir” Do in Linux? Q-30: What Does “ls” Do in Linux? Q-31: What Does “TR” command Do in Linux? Q-32: What are the Linux “Logs”? Q-33: How to Check “Logs” in Linux? Q-34: How to Play Games on Linux? Q-35: What is “zsh” in Linux? Q-36: Why “zsh” Should Be Used Instead of Bash? Q-37: Why “top” Command is Used in Linux? Q-38: Which “File System” is\par\smallskip
\textbf{[2]}~\textit{What is Linux? |.} What is Linux and why is it so popular? Browse the article What is Linux and why is it so popular? Every desktop computer uses an operating system. The most popular operating systems in use today are: \exsrc{Linux is an operating system} — very much like UNIX — that has become very \exsrc{popular over the last several years.} Operating systems are computer programs. An operating system \exsrc{is the first piece of software} that the computer executes when you turn the machine on. The operating system loads itself \exsrc{into memory and begins managing the resources available on the computer.} It then\par\smallskip
\textbf{[3]}~\textit{Who Made Linux and Why Is It Free?.} available on Unix. It used the Qt toolkit and was the first complete desktop environment for Linux. The Qt toolkit was under a proprietary license, so it wasn’t fully free software. This left other developers wanting an alternative. Miguel de Icaza and Federico Mena founded GNOME GNOME Explained: A Look at One of Linux's Most Popular Desktops GNOME Explained: A Look at One of Linux's Most Popular Desktops You're interested in Linux, and you've come across ``GNOME'', an acronym for GNU Network Object Model Environment. GNOME is one of the most popular open source interfaces, but what does that mean?\par\smallskip
\textbf{[4]}~\textit{Rute User's Tutorial and Exposition. D. LINUX Advocacy FAQ.} different distributions? What is the best distribution to use? Where do I get LINUX? D.4 LINUX Support Where does a person get LINUX support? My purchased software is supported; how does LINUX compete? D.5 LINUX Compared to Other Systems What is the most popular UNIX in the world? How many LINUX systems are there out there? What is the total cost of installing and running LINUX compared to a proprietary non-UNIX system? What is the total cost of installing and running a LINUX system compared to a proprietary UNIX system? How does LINUX compare to other operating systems in performance?\par\smallskip
\textbf{[5]}~\textit{Windows or Linux Hosting? Does it matter? - Small Planet Online, LLC.} The only real consideration is what type of technology your site is being developed in. For the record, Linux is almost unheard of at home, but \exsrc{is by far the most popular option for web hosting.} It’s secure, fast, and reliable. We normally steer our client towards \exsrc{open source solutions like Linux} and PHP, \exsrc{because they work well and} you will be in very very good company. Here’s a brief article from a few years ago that discusses why Linux is so darn popular for web hosting.\par\smallskip
\tcblower
\exlabel{Answers}
\noindent\textbf{Base model:} \exC{Linux is an operating system that is very much like UNIX and has become increasingly popular over the last several years.} \exC{As an operating system, it is the first piece of software a computer executes when turned on; it loads into memory and begins managing the resources available on the computer.} \exS{Linux is popular for several reasons: it is secure, fast, and reliable, making it the most popular option for web hosting.} \exU{Additionally, it is an open-source solution that allows developers to work well together, and it serves as a free alternative to proprietary software, such as the Qt toolkit which was not fully free.}\par\smallskip
\noindent\textbf{GRPO + KL:} \exC{Linux is an operating system that is very much like UNIX and has become increasingly popular over the last several years.} \exC{As an operating system, it is the first piece of software a computer executes when turned on; it loads into memory and begins managing the resources available on the machine.} \exS{Linux is popular for several reasons: it is secure, fast, and reliable, making it the most popular option for web hosting despite being rarely used in home environments.} \exS{Additionally, it is favored because it utilizes open source solutions that work well, placing users in very good company.}\par\smallskip
\noindent\textbf{Copy penalty:} \exV{The provided passages do not contain the answer to this question.}\par\smallskip
\noindent\textbf{Sentence reward:} \exC{Linux is an operating system that is very much like UNIX and has become increasingly popular over the last several years.} \exC{As an operating system, it is the first piece of software a computer executes when turned on; it loads into memory and begins managing the resources available on the machine.} \exS{Linux is popular for web hosting because it is secure, fast, and reliable, and it is considered a secure, fast, and reliable option that is by far the most popular choice for this purpose.} \exC{Additionally, developers often prefer open source solutions like Linux because they work well and are widely used.}\par\smallskip
\begin{oursbox}\noindent\textbf{\method{}:} \exS{Linux is an operating system that is very similar to UNIX and has gained significant popularity over the last several years.} \exS{As an operating system, it is the first software executed when a computer is turned on; it loads into memory and manages the available resources on the machine.} \exS{Linux is widely used for web hosting, where it is considered the most popular option because it is secure, fast, and reliable.} \exS{Its popularity is further supported by the fact that it is an open-source solution that works well with technologies like PHP, placing users in good company with many others who utilize it.}\end{oursbox}
\end{exbox}

%% file: tables/app_example_2.tex
\begin{exbox}{Example 2: Llama-3.1-8B on RAGTruth (in domain)\\\normalfont Question: symptoms and causes of ibs}{0.26}
\exlabel{Passages}
\textbf{[1]}~The cause of irritable bowel syndrome is currently unknown. \exsrc{It is thought to result from a combination of} abnormal gastrointestinal (GI) \exsrc{tract movements, increased awareness of bodily functions, and} a disruption in the \exsrc{communication between the brain and the GI tract. IBS-D} is irritable bowel syndrome with diarrhea.\par\smallskip
\textbf{[2]}~The exact cause of irritable bowel syndrome is unknown. It is \exsrc{believed to be due to a number of factors, including} alteration in the gastrointestinal \exsrc{(GI) tract motility, abnormal nervous system signals, increased sensitivity to pain, and food intolerances.} The following are risk factors thought to cause IBS: 1 \exsrc{Abnormal movements of the colon and small intestines} (too fast or slow, or too strong) 2 Hypersensitivity to pain from a full bowel or gas. 3 Food sensitivities, possibly caused by poor absorption of sugars or acids in food.\par\smallskip
\textbf{[3]}~People with IBS have symptoms that can include: 1 Diarrhea (often described as violent episodes of diarrhea) 2 Constipation. 3 \exsrc{Constipation alternating with diarrhea. Belly pains or cramps,} usually in the lower half of the belly, that get worse after meals and feel better after a bowel 1 movement. \exsrc{A lot of gas or bloating.}\par\smallskip
\tcblower
\exlabel{Answers}
\noindent\textbf{Base model:} \exS{Irritable bowel syndrome, or IBS, is a condition with unknown causes, but it is thought to result from a combination of factors.} \exC{These factors may include abnormal movements of the colon and small intestines, hypersensitivity to pain, and food intolerances.} \exS{People with IBS can experience a range of symptoms, including diarrhea, constipation, or alternating episodes of both, as well as belly pains or cramps, especially after eating, and a lot of gas or bloating.}\par\smallskip
\noindent\textbf{GRPO + KL:} \exC{Irritable bowel syndrome, or IBS, is a condition with unknown causes, thought to result from a combination of factors such as abnormal gastrointestinal movements, increased awareness of bodily functions, and disrupted communication between the brain and the GI tract.} \exC{It is also believed to be due to factors like altered GI tract motility, abnormal nervous system signals, increased sensitivity to pain, and food intolerances.} \exS{People with IBS may experience symptoms including diarrhea, constipation, or alternating episodes of both, accompanied by belly pains or cramps, especially after meals, and a lot of gas or bloating.}\par\smallskip
\noindent\textbf{Copy penalty:} \exS{Irritable bowel syndrome, or IBS, is a condition with unknown causes, but it is thought to be the result of a combination of factors, including abnormal movements of the GI tract, increased sensitivity to bodily functions, and disrupted communication between the brain and the GI tract.} \exV{Symptoms of IBS can include diarrhea, constipation, or alternating between the two, as well as belly pains or cramps, gas, and bloating.}\par\smallskip
\noindent\textbf{Sentence reward:} \exC{Irritable bowel syndrome, or IBS, is a condition with unknown causes, but it is thought to result from a combination of factors including abnormal gastrointestinal tract movements, increased awareness of bodily functions, and disrupted communication between the brain and the GI tract.} \exC{The exact cause is also believed to be due to a number of factors, including alterations in the GI tract motility, abnormal nervous system signals, increased sensitivity to pain, and food intolerances.} \exC{Symptoms of IBS can include diarrhea, constipation, constipation alternating with diarrhea, belly pains or cramps, and a lot of gas or bloating.}\par\smallskip
\begin{oursbox}\noindent\textbf{\method{}:} \exS{Irritable bowel syndrome, or IBS, is a condition with unknown causes, but it is thought to be the result of a combination of factors.} \exC{These factors may include abnormal movements of the gastrointestinal tract, increased awareness of bodily functions, and disruptions in communication between the brain and the GI tract.} \exS{IBS can also be caused by factors such as abnormal nervous system signals, hypersensitivity to pain, and food intolerances.} \exS{Symptoms of IBS can vary, but they often include diarrhea, constipation, or a combination of both, as well as belly pains or cramps, gas, and bloating.}\end{oursbox}
\end{exbox}

%% file: tables/app_example_3.tex
\begin{exbox}{Example 3: Qwen3.5-4B on ClapNQ (held out)\\\normalfont Question: what is the tax rate for lottery winnings in california}{0.36}
\exlabel{Passages}
\textbf{[1]}~\textit{California State Lottery.} All prizes for \exsrc{Fantasy 5, Daily Derby, Daily 3, Daily 4, and non-jackpot SuperLotto Plus, Mega Millions, and Powerball} prizes, are paid out in one payment, less 25 \% or 33 \% (depending upon \exsrc{the winner's tax documentation)} Federal withholding if the prize is over \$5,000. \exsrc{Merchandise prizes over \$5,000 are subject to} 33 \% Federal withholding. Scratchers tickets are generally one - payment prizes; however, some games have annuity options for payments each year, or per week. \exsrc{California does not tax California Lottery winnings, however} it taxes \exsrc{lottery winnings from other jurisdictions.} For \exsrc{SuperLotto Plus and Mega Millions} jackpots, the player may choose a single cash payout for a floating percentage of the jackpot, or an annuity. The \exsrc{SuperLotto Plus, Mega Millions, and Powerball} payment schedule are on \exsrc{a graduated basis over 30 annual payments.} Until 2005, when California joined Mega Millions, the payment choice on SuperLotto Plus had to be made when the ticket was bought. Since then, there is a 60 - day window after winning, in which the choice of cash or annuity is to be made.\par\smallskip
\tcblower
\exlabel{Answers}
\noindent\textbf{Base model:} \exU{California does not tax lottery winnings.} \exC{However, the state does tax lottery winnings from other jurisdictions.} \exS{Additionally, federal withholding applies to prizes over \$5,000, with rates of 25\% or 33\% depending on the winner's tax documentation, and merchandise prizes over \$5,000 are subject to a 33\% federal withholding rate.}\par\smallskip
\noindent\textbf{GRPO + KL:} \exV{The provided passages do not contain the answer to this question.}\par\smallskip
\noindent\textbf{Copy penalty:} \exV{The provided passages do not contain the answer to what the specific tax rate is for lottery winnings in California.}\par\smallskip
\noindent\textbf{Sentence reward:} \exC{The provided passages indicate that California does not tax California Lottery winnings.} \exS{However, the text notes that California does tax lottery winnings from other jurisdictions.}\par\smallskip
\begin{oursbox}\noindent\textbf{\method{}:} \exS{California does not impose a tax on lottery winnings won within the state, although it does tax winnings obtained from other jurisdictions.} \exS{Regarding federal withholding, prizes exceeding \$5,000 for games such as Fantasy 5, Daily Derby, Daily 3, Daily 4, and non-jackpot SuperLotto Plus, Mega Millions, and Powerball are subject to a federal withholding rate of either 25\% or 33\%, depending on the winner's tax documentation.} \exS{Merchandise prizes over \$5,000 are specifically subject to a 33\% federal withholding rate.} \exS{For the jackpots of SuperLotto Plus and Mega Millions, players may elect a single cash payout or an annuity option, with the payment schedule for these games, as well as Powerball, following a graduated basis over 30 annual payments.}\end{oursbox}
\end{exbox}

%% file: tables/app_example_4.tex
\begin{exbox}{Example 4: Llama-3.2-3B on ASQA (in domain)\\\normalfont Question: What kind of series is game of thrones?}{0.62}
\exlabel{Passages}
\textbf{[1]}~\textit{A Game of Thrones.} A Game of Thrones A Game of Thrones is the first novel in ``A Song of Ice and Fire'', \exsrc{a series of fantasy novels by} American \exsrc{author George R. R. Martin.} It was first published on August 1, 1996. The novel won the 1997 Locus Award and was nominated for both the 1997 Nebula Award and the 1997 World Fantasy Award. The novella ``Blood of the Dragon'', comprising the Daenerys Targaryen chapters from the novel, won the 1997 Hugo Award for Best Novella. In January 2011 the novel became a ``New York Times'' Bestseller and reached \#1 on the list\par\smallskip
\textbf{[2]}~\textit{Game of Thrones.} in a fantasy setting of magic and dragons. In a 2012 study of deaths per episode, it ranked second out of 40 recent U.S. TV drama series (with an average of 14). The series is generally praised for what is perceived as a sort of medieval realism. George R.R. Martin set out to make the story feel more like historical fiction than contemporary fantasy, with less emphasis on magic and sorcery and more \exsrc{on battles, political intrigue, and} the characters, believing that magic should be used moderately in the epic fantasy genre. Martin has stated that ``the true horrors of\par\smallskip
\textbf{[3]}~\textit{Game of Thrones (season 1).} The majority of reviews for the first season were very positive, with critics noting the high production values, the well-realized world, compelling characters, and giving particular note to the strength of the child actors. Tim Goodman's review for ``The Hollywood Reporter'' stated, ``a few minutes into HBO's epic Game of Thrones series, it's clear that the hype was right and the wait was worth it''. Mary McNamara from the ``Los Angeles Times'' called it ``...a great and thundering series of political and psychological intrigue bristling with vivid characters, cross-hatched with tantalizing plotlines and seasoned with a splash of fantasy''. ``New\par\smallskip
\textbf{[4]}~\textit{Game of Thrones.} in the pilot, playing a character described as ``a charismatic socialite hiding a dark secret.'' Game of Thrones Game of Thrones is an American fantasy drama television series created by David Benioff and D. B. Weiss. It is an adaptation of ``A Song of Ice and Fire'', George R. R. Martin's series of fantasy novels, the first of which is ``A Game of Thrones''. It is filmed in Belfast and elsewhere in Northern Ireland, Canada, Croatia, Iceland, Malta, Morocco, Scotland, Spain, and the United States. The series premiered on HBO in the United States on April 17, 2011, and its\par\smallskip
\textbf{[5]}~\textit{A Game of Thrones (board game).} A Game of Thrones (board game) A \exsrc{Game of Thrones is a} strategy board game created by Christian T. Petersen and released by Fantasy Flight Games in 2003. The game is based on the ``A Song of Ice and Fire'' fantasy series \exsrc{by George R. R. Martin.} It was followed in 2004 by the expansion ``A Clash of Kings'', and in 2006 by the expansion ``A Storm of Swords''. ``A Game of Thrones'' allows the players to take on the roles of several of the Great Houses vying for control of the Seven Kingdoms, including House Stark, House Lannister, House\par\smallskip
\tcblower
\exlabel{Answers}
\noindent\textbf{Base model:} \exS{A fantasy series set in a medieval-like world with magic and dragons, with a focus on battles, political intrigue, and character development, and an adaptation of a series of fantasy novels by George R.R. Martin.}\par\smallskip
\noindent\textbf{GRPO + KL:} \exC{Game of Thrones is a fantasy series.}\par\smallskip
\noindent\textbf{Copy penalty:} \exV{The series is a fantasy series.}\par\smallskip
\noindent\textbf{Sentence reward:} \exC{Game of Thrones is a fantasy series.}\par\smallskip
\begin{oursbox}\noindent\textbf{\method{}:} \exS{The series is a fantasy series with elements of historical fiction, and it is set in a medieval-like world with magic and dragons.} \exS{The author, George R.R. Martin, aimed to create a more realistic story with a focus on battles, politics, and character development, rather than relying heavily on magic.} \exS{The series has been adapted into a television show, which was praised for its production values, characters, and plotlines.}\end{oursbox}
\end{exbox}